\newcounter{ALC@tempcntr}% Temporary counter for storage
\theoremstyle{plain}% default
\newtheorem{theorem}{Theorem}%[section]
\newtheorem{proposition}{Proposition}%[section]
\newtheorem{lemma}{Lemma}%[section]
\newtheorem{remark}{Remark}%[section]
\theoremstyle{definition}
\theoremstyle{remark}
\newcommand{\beq}{\begin{eqnarray}}
\newcommand{\eeq}{\end{eqnarray}}
\newcommand{\field}[1]{\mathbb{#1}}
\newcommand{\R}{\field{R}}
\newfont{\bbb}{msbm10 scaled 500}
\newfont{\bb}{msbm10 scaled 1100}
\newcommand{\av}{{\bf a}}
\newcommand{\bv}{{\bf b}}
\newcommand{\cv}{{\bf c}}
\newcommand{\ev}{{\bf e}}
\newcommand{\fv}{{\bf f}}
\newcommand{\hv}{{\bf h}}
\newcommand{\mv}{{\bf m}}
\newcommand{\uv}{{\bf u}}
\newcommand{\wv}{{\bf w}}
\newcommand{\vv}{{\bf v}}
\newcommand{\xv}{{\bf x}}
\newcommand{\yv}{{\bf y}}
\newcommand{\zv}{{\bf z}}
\newcommand{\Fc}{{\cal F}}
\newcommand{\Lc}{{\cal L}}
\newcommand{\Nc}{{\cal N}}
\newcommand{\Rc}{{\cal R}}
\newcommand{\Sc}{{\cal S}}
\newcommand{\Tc}{{\cal T}}
\newcommand{\Uc}{{\cal U}}
\newcommand{\Vc}{{\cal V}}
\newcommand{\Xc}{{\cal X}}
\newcommand{\Yc}{{\cal Y}}
\DeclareMathOperator{\rank}{rank}
\DeclareMathOperator{\relu}{ReLU}
\newcommand{\remove}[1]{}
\newcommand{\avg}{{\mathbb E}}
\newcommand{\pr}{{\mathbb P}}
\newcommand\reals{{\mathbb R}}
\theoremstyle{definition}
\theoremstyle{remark}
\newcommand{\latexe}{{\LaTeX\kern.125em2%
                      \lower.5ex\hbox{$\varepsilon$}}}
\chardef\bslash=`\\	
\def\square{\RIfM@\bgroup\else$\bgroup\aftergroup$\fi
\vcenter{\hrule\hbox{\vrule\@height.6em\kern.6em\vrule}
\hrule}\egroup}\makeatother\makeindex
\definecolor{OXO-emph}{RGB}{153,0,0}
\DeclareMathAlphabet{\mathpzc}{OT1}{pzc}{m}{it}
\title{Representation Learning and Recovery in the ReLU Model}
\author[1]{Arya Mazumdar}
\author[1,2]{Ankit Singh Rawat}
\affil[1]{\normalsize College of Information and Computer Sciences, University of Massachusetts Amherst, Amherst, MA 01003, USA.}
\affil[2]{\normalsize Research Laboratory of Electronics, MIT, Cambridge, MA 02139, USA.\newline E-mail: arya@cs.umass.edu, asrawat@mit.edu.}
\begin{document}

\maketitle

%%%%%%%%%%%%%%%%%%%%%%%%%%%%%%%%%%%%%%%%%%%%%%%%%%%%%%%%%%%%%%%%%%%%%%%%%%%%%%
%%%%%%%%%%%%%%%%%%%%%%%%%%%%%%%%%%%%%%%%%%
% ABSTRACT
%%%%%%%%%%%%%%%%%%%%%%%%%%%%%%%%%%%%%%%%%%
%%%%%%%%%%%%%%%%%%%%%%%%%%%%%%%%%%%%%%%%%%%%%%%%%%%%%%%%%%%%%%%%%%%%%%%%%%%%%%

\begin{abstract}
Rectified linear units, or ReLUs, have become the preferred activation function for artificial neural networks. In this paper we consider two basic learning problems assuming that the underlying data follow a generative model based on a ReLU-network -- a neural network with ReLU activations. As a primarily theoretical study, we limit ourselves to a single-layer network. The first problem we study corresponds to dictionary-learning in the presence of nonlinearity (modeled by the ReLU functions). Given a set of observation vectors $\mathbf{y}^i \in \mathbb{R}^d, i =1, 2, \dots , n$, we  aim to recover $d\times k$ matrix $A$ and the latent vectors $\{\mathbf{c}^i\} \subset \mathbb{R}^k$ under the model $\mathbf{y}^i = \mathrm{ReLU}(A\mathbf{c}^i +\mathbf{b})$, where $\mathbf{b}\in \mathbb{R}^d$ is a random bias. We show that it is possible to recover the column space of $A$ within an error of $O(d)$ (in Frobenius norm) under certain conditions on the probability distribution of $\mathbf{b}$.

The second problem we consider is that of robust recovery of the signal in the presence of outliers, i.e., large but sparse noise. In this setting we are interested in recovering the latent vector $\mathbf{c}$ from its noisy nonlinear sketches of the form $\mathbf{v} = \mathrm{ReLU}(A\mathbf{c}) + \mathbf{e}+\mathbf{w}$, where $\mathbf{e} \in \mathbb{R}^d$ denotes the outliers with sparsity $s$ and $\mathbf{w} \in \mathbb{R}^d$ denote the dense but small noise. This line of work has recently been studied (Soltanolkotabi, 2017) without the presence of outliers. For this problem, we show that a generalized LASSO algorithm is able to recover the signal $\mathbf{c} \in \mathbb{R}^k$ within an $\ell_2$ error of $O(\sqrt{\frac{(k+s)\log d}{d}})$ when $A$ is a random Gaussian matrix. 
\end{abstract}

%%%%%%%%%%%%%%%%%%%%%%%%%%%%%%%%%%%%%%%%%%%%%%%%%%%%%%%%%%%%%%%%%%%%%%%%%%%%%%
%%%%%%%%%%%%%%%%%%%%%%%%%%%%%%%%%%%%%%%%%%
% INTRODUCTION
%%%%%%%%%%%%%%%%%%%%%%%%%%%%%%%%%%%%%%%%%%
%%%%%%%%%%%%%%%%%%%%%%%%%%%%%%%%%%%%%%%%%%%%%%%%%%%%%%%%%%%%%%%%%%%%%%%%%%%%%%

\section{Introduction}
\label{sec:sys}
Rectified Linear Unit (ReLU) is a basic nonlinear function defined to be $\relu:\reals \to \reals_{+}\cup \{0\}$ as $\relu(x) \equiv \max(0,x)$. For any matrix $X$, $\relu(X)$ denotes the  matrix obtained by applying the ReLU function on each of the coordinates of the matrix $X$. ReLUs are  building blocks of many nonlinear data-fitting problems based on deep neural networks (see, e.g., \cite{Soltan17} for a good exposition).

Let $\Yc \subset \R^{d}$ be a collection of message vectors that are of interest to us. Depending on the application at hand, the message vectors, i.e., the constituents of $\Yc$, may range from images, speech signals, network access patterns to user-item rating vectors and so on. We assume that the message vectors satisfy a generative model, where each message vector can be approximated by a map $g :\R^{k} \to \R^{d}$ from the latent space to the ambient space, i.e., for each $\yv \in \Yc$,
\begin{align}
\label{eq:nl_model}
\yv \approx g(\cv)~~\text{for some}~\cv \in \R^{k}.
\end{align}
Motivated by the recent results on developing the generative models for various real-life signals (see e.g., \cite{GAN14, KW14, BJPD17}), the non-linear maps $g$ that take the following form warrant special attention.
\begin{align}
\label{eq:nn_model}
g(\cv) = h\left(A_{l}h(A_{l-1}\cdots h(A_1\cv)\cdots ))\right),
\end{align} 
i.e., $g$ is the function corresponding to an $l$-layer neural network with the activation function $h$. Here, for $i \in [l]$, $A_i \in R^{d_i \times d_{i-1}}$ with $d_0 = k$ and $d_l = d$, denotes the weight matrix for the $i$-th layer of the network. In the special case, where the activation function $h$ is the $\relu$ function, the message vectors of the interest satisfy the following.
\begin{align}
\label{eq:relu_model}
\yv \approx \relu\left(A_{l}\relu(A_{l-1}\cdots \relu(A_1\cv + \bv_1)\cdots + \bv_{l-1} ) + \bv_{l})\right),
\end{align} 
where, for $i \in [l]$, $\bv_{i} \in \R^{d_i}$ denotes the biases of the neurons (or output units) at the $i$-th layer of the network. 

The specific generative model in \eqref{eq:relu_model} raises multiple interesting questions that play fundamental role in understanding the underlying data and designing systems and algorithms for processing the data. Two such most basic questions are as follows:
\begin{enumerate}
\item\textbf{Learning the representation:}~Given the observations $\{\yv^{1},\yv^{1},\ldots, \yv^{n}\} \subset \R^{d}$ from the model (cf.~\eqref{eq:relu_model}), recover the parameters of the model, i.e., $\{\hat{A}_{t}\}_{t \in [l]}$, and $\{\cv^{1},\cv^{2},\ldots, \cv^{n}\} \subset \R^{k}$ such that 
\begin{align}\label{eq:dict}
\yv^{i} \approx \relu\left(\hat{A}_{l}\relu(\hat{A}_{l-1}\cdots \relu(\hat{A}_1\cv^{i} + \bv_1)\cdots + \bv_{l-1} ) + \bv_{l})\right)~\forall~i \in [n].
\end{align}
Note that this question is different from training the model, in which case the set $\{\cv^{1},\cv^{2},\ldots, \cv^{n}\}$ is known (and possibly chosen accordingly).
\item\textbf{Recovery of the signal in the presence of errors:}~Given the erroneous (noisy) version of a vector generated by the model (cf.~\eqref{eq:relu_model}), denoise the observation or recover the latent vector. Formally, given
\begin{align}
\label{eq:noisy_obs}
\vv = \yv + \ev + \wv = \relu\left(A_{l}\relu(A_{l-1}\cdots \relu(A_1\cv + \bv_1)\cdots + \bv_{l-1} ) + \bv_{l})\right) + \ev + \wv
\end{align}
and the knowledge of model parameters, obtain $\hat{\yv} \in \R^{d}$ or $\hat{\cv} \in \R^{k}$ such that $\|\yv - \hat{\yv}\|$ or $\|\cv - \hat{\cv}\|$ is small, respectively. In \eqref{eq:noisy_obs}, $\ev$ and $\wv$ correspond to outliers, i.e., large but sparse errors, and (dense but small) noise, respectively. 
\end{enumerate}
Apart from being closely related, one of our main motivations behind studying these two problems together comes from the recent work on associative memory \cite{KSSV14, mazumdar2015associative,mazumdar2017assciative}. An associative memory consists of a learning phase, where a generative model is learned from a given dataset; and a recovery phase, where given a noisy version of a data point generated by the generative model, the noise-free version is recovered with the help of the knowledge of the generative model.

There have been a recent surge of interest in learning ReLUs, and the above two questions are of basic interest even for a single-layer network (i.e., nonlinearity comprising of a single ReLU function). It is conceivable that understanding the behavior of a single-layer network would allow one to use some `iterative peeling off' technique to develop a theory for multiple layers. In \cite{GoelKKT17}, the problem of recovering $\relu$-model under Reliable Agnostic learning model of \cite{KKM12} is considered. Informally speaking, under very general distributional assumptions (the rows of $A$ are sampled from some distribution), given $A$ and $\yv = \relu(A\cv)$, \cite{GoelKKT17} propose an algorithm that recovers a hypothesis which has an error-rate (under some natural loss function defined therein) of $\epsilon$ with respect to the true underlying $\relu$-model. Moreover, the algorithm runs in time polynomial in $d$ and exponential in ${1}/{\epsilon}$. As opposed to this, given $A$ and the corresponding output of the $\relu$-network $\yv = \relu(A\cv + \bv)$, we focus on the problem of recovering $\cv$ itself. Here, we note that the the model considered in \cite{GoelKKT17} does not consider the presence of outliers.

 \cite{Soltan17} obtained further results on this model under somewhat different learning guarantees. Assuming that the entries of the matrix $A$ to be i.i.d. Gaussian, \cite{Soltan17} show that with high probability a gradient descent algorithm recovers $\cv$ within some precision in terms of $\ell_2$-loss: the relative error decays exponentially with the number of steps in the gradient descent algorithm. The obtained result is more general as it extends to constrained optimizations in the presence of some regularizers (for example, $\cv$ can be restricted to be a sparse vector, etc.).  

However both of these works do not consider the presence of outliers (sparse but large noise) in the observation. The sparse noise is quite natural to assume, as many times only partial observations of a signal vector are obtained. {The ReLU model with outliers as considered in this paper can be thought of as a nonlinear version of the problem of recovering $\cv$ from linear observations of the form $\vv = A\cv+ \ev$, with $\ev$ denoting the outliers. This problem with linear observations was studied in the celebrated work of \cite{can05}. We note that the technique of \cite{can05} does not extend to the case when there is a dense (but bounded) noise component present.} Our result in this case  is a natural generalization and complementary to the one in \cite{Soltan17} in that 1) we present a recovery method which is robust to outliers and 2) instead of analyzing gradient descent we directly analyze the performance of the minimizer of our optimization program (a generalized LASSO) using the ideas from \cite{PlanV16, NgTran13}.
 
On the other hand, to the best of our knowledge, the representation learning problem for single-layer networks has not been studied as such. The representation learning problem for single-layer ReLUs bears some similarity with matrix completion problems, a fact we greatly exploit later. In low rank matrix completion,  a matrix $M= AX$ is visible only partially, and the task is to recover the unknown entries by exploiting the fact that it is low rank. In the case of \eqref{eq:dict}, we are more likely to observe the positive entries of the matrix $M$, which, unlike a majority of matrix completion literature, creates the dependence between the matrix $M$ and the sampling procedure.\\
 
\noindent\textbf{Main result for representation learning.}~We assume to have observed $d\times n$ matrix $Y= \relu(AC+\bv\otimes \mathbf{1}^T)$ where $A$ is a $d\times k$ matrix, $C$ is a $k \times n$ matrix, both unknown,  $\bv\in \reals^d$ is a random i.i.d. bias, and $\otimes$ denote the Kronecker product\footnote{This is to ensure that the bias is random, but does not change over different observation of the data samples.}. We show that a relaxed maximum-likelihood method guarantees the recovery of the matrix $AC$ with an error in Frobenius norm at most $O(\sqrt{d})$ with high probability (see Theorem~\ref{thm:MLperformance_real} for the formal statement). Then leveraging a known result for recovering column space of a perturbed matrix (see Theorem.~\ref{thm:yu} in the appendix), we show that it is possible to also recover the column space of $A$ with similar guarantee.  

The main technique that we use to obtain this result is inspired by the recent work on matrix completion by \cite{DPBW14}. One of the main challenges that we face in recovery here is that while an entry of the matrix $Y$ is a random variable (since $\bv$ is a random bias), whether that is being observed or being cut-off by the ReLU function (for being negative) depends on the value of the entry itself. In general matrix completion literature, the entries of the matrix being observed are sampled i.i.d. (see, for example, \cite{CR09, KMS10,C15} and references therein). For the aforementioned reason we cannot use most of these results off-the-shelf. However, similar predicament is (partially) present in \cite{DPBW14}, where entries are quantized while being observed.

Similar to \cite{DPBW14}, the tools that prove helpful in this situation are the symmetrization trick and the contraction inequality \cite{LedT13}. However, there are crucial difference of our model from \cite{DPBW14}: in our case the bias vector, while random, do not change over observations. This translates to less freedom during the transformation of the original matrix to the observed matrix, leading to dependence among the elements in a row. Furthermore, the analysis becomes notably different since the positive observations are not quantized. \\

\noindent\textbf{Main result for noisy recovery.}~We plan to recover $\cv\in \reals^k$ from observations $\vv = \relu(A\cv +\bv) +  \ev +\wv$, where $A$ is a $d \times k$ standard i.i.d. Gaussian matrix, $\ev \in \reals^d$ is the vector containing outliers (sparse noise) with $\|\ev\|_0 \le s$, and $\wv\in \reals^d $ is bounded dense noise such that $\|\wv\|_\infty \le \delta$. To recover $\cv$ (and $\ev$) we employ the LASSO algorithm, which is inspired by the work of \cite{PlanV16} and \cite{NgTran13}. In particular, \cite{PlanV16} recently showed that a signal can be provably recovered (up to a constant multiple) from its nonlinear Gaussian measurements via the LASSO algorithm by treating the measurements as linear observations. In the context of $\relu$ model, for outlier-free measurements $\vv = \relu(A\cv +\bv) + \wv$, it follows from \cite{PlanV16} that LASSO algorithm outputs $\mu\cv$ as the solution with $\mu = \avg(g\cdot \relu(g + b))$, where $g$ is a Gaussian random variable and $b$ is a random variable denoting bias associated with the $\relu$ function. We show that this approach guarantees with high probability recovery of $\cv$ within an $\ell_2$ error of $O\big(\sqrt{\frac{(k+s)\log d}{d}}\big)$ even when the measurements are corrupted by outliers $\ev$. This is achieved by jointly minimizing the square loss over $(\cv, \ev)$ after treating our measurements as linear measurements $\vv' = A\cv + \ev +  \wv$ and adding an $\ell_1$ regularizer to the loss function to promote the sparsity of the solution for $\ev$ (we also recover $\ev$, see Theorem~\ref{thm:rlasso} for a formal description).\\

\noindent\textbf{Organization.}~The paper is organized as follows. In section~\ref{sec:bg}, we describe some of the  notations used throughout the paper and  introduce the some technical tools that would be useful to prove our main results. In the same section (subsection~\ref{sec:system}), we provide the formal models of the problem we are studying. In section~\ref{sec:learning}, we provide detailed proofs for our main results on the representation learning problem (see, Theorem~\ref{thm:MLperformance_real}). Section~\ref{sec:robust} contains the proofs and the techniques used  for the recovery problem in the presence of outliers (see, Theorem~\ref{thm:rlasso}).

%%%%%%%%%%%%%%%%%%%%%%%%%%%%%%%%%%%%%%%%%%%%%%%%%%%%%%%%%%%%%%%%%%%%%%%%%%%%%%
%%%%%%%%%%%%%%%%%%%%%%%%%%%%%%%%%%%%%%%%%%
% BACKGROUND
%%%%%%%%%%%%%%%%%%%%%%%%%%%%%%%%%%%%%%%%%%
%%%%%%%%%%%%%%%%%%%%%%%%%%%%%%%%%%%%%%%%%%%%%%%%%%%%%%%%%%%%%%%%%%%%%%%%%%%%%%

\section{Notations and Technical Tools}
\label{sec:bg}

\subsection{Notation}
\label{sec:notation}
For any positive integer $n$, define $[n]\equiv \{1,2,\ldots,n\}$. Given a matrix $M \in \R^{d \times n}$, for $(i, j) \in [d] \times [n]$, $M_{i,j}$ denotes the $(i, j)$-th entry of $M$. For $i \in [d]$, $\mv_{i} = (M_{i,1}, M_{i,2},\ldots, M_{i,n})^T$ denotes the vector containing the elements of the $i$-th row of the matrix $M$. Similarly, for $j \in [n]$, $\mv^{j} = (M_{1,j}, M_{2,j},\ldots, M_{d,j})^T$ denotes the $j$-th column of the matrix $M$. Recall that the function ${\rm ReLU}:\R \rightarrow \R_{+}\cup\{0\}$ takes the following form.
\begin{align}
{\rm ReLU}(x) = \max(x, 0).
\end{align}
For a matrix $X \in \R^{d \times n}$, we use ${\rm ReLU}(X)$ to denote the $d\times n$ matrix obtained by applying the {\rm ReLU} function on each of the entries of the matrix $X$. For two matrix $A$ and $B$, we use $A\otimes B$ to represent the Kronecker product of $A$ and $B$.

Given a matrix $P \in \R^{d \times n}$, $$\|P\|_F = \sqrt{\sum_{(i,j) \in [d]\times[n]}P_{i,j}^2}$$ denotes its Frobenius norm. Also, let $\|P\|$ denote the $\ell_2$ operator norm of $P$, i.e. the maximum singular value of $P$. We let $\|P\|_\ast$ denote the nuclear norm of $P$. Similar to \cite{DPBW14}, we define a flatness parameter associated with a function $f:\R \to [0,1]$:
\begin{align}
\label{eq:flat}
\beta_{\alpha}(f) := \inf_{|x| \le \alpha} \frac{|f'(x)|^2}{4|f(x)|}.
\end{align}
$\beta_{\alpha}$ quantifies how flat $f$ can be in the interval $[-\alpha, \alpha]$. We also define a Lipschitz parameter $L_\alpha(f)$ for a function $f$ as
follows:
\begin{align}
\label{eq:steep}
L_\alpha(f) :=  \max\Big\{ \sup_{|x| \le \alpha}\frac{f(x)}{\int_{-\infty}^xf(y) dy}, \sup_{|x| \le \alpha} \frac{f'(x)}{f(x)}\Big\}.
\end{align}

\subsection{Techniques to bound the supremum of an empirical process}
In the course of this paper, namely in the representation learning part, we use the key tools of symmetrization and contraction to bound the supremum of an empirical process following the lead of \cite{DPBW14} and the analysis of generalization bounds in the statistical learning literature. In particular, we need the following two statements.
\label{sec:contraction}
\begin{theorem}[\bf Symmetrization of expectation]
\label{thm:symm}
Let $X_1, X_2,\ldots, X_n$ be $n$ independent RVs taking values in $\Xc$ and $\Fc$ be a class of~$\R$-valued functions on $\Xc$. Furthermore, let $\varepsilon_1, \varepsilon_2,\ldots, \varepsilon_n$ be $n$ independent Rademacher RVs. Then, for any $t \geq 1$,
\begin{align}
\avg\left(\sup_{f \in \Fc}\Big\vert\sum_{i = 1}^{n}\big(f(X_i) - \avg f(X_i)\big)\Big\vert^{t}\right) \leq 2^{t}\avg\left(\sup_{f \in \Fc}\Big\vert\sum_{i = 1}^{n}\varepsilon_if(X_i)\Big\vert^{t}\right).
\end{align}
\end{theorem}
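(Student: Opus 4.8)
The statement to prove is the \textbf{Symmetrization of expectation} theorem: for independent random variables $X_1,\ldots,X_n$ taking values in $\Xc$, a class $\Fc$ of real-valued functions on $\Xc$, independent Rademacher variables $\varepsilon_1,\ldots,\varepsilon_n$, and any $t\ge 1$,
\[
\avg\Big(\sup_{f\in\Fc}\big|\textstyle\sum_{i=1}^n (f(X_i)-\avg f(X_i))\big|^t\Big)\le 2^t\,\avg\Big(\sup_{f\in\Fc}\big|\textstyle\sum_{i=1}^n \varepsilon_i f(X_i)\big|^t\Big).
\]

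\textbf{Proof plan.} The plan is to use the standard ``ghost sample'' (decoupling) argument combined with Jensen's inequality and a symmetrization-by-sign step. First I would introduce an independent copy $X_1',\ldots,X_n'$ of the sequence $X_1,\ldots,X_n$ (the ghost sample), defined on a common probability space and independent of everything else, including the Rademacher variables. Since $\avg f(X_i)=\avg_{X'} f(X_i')$, I can write, for each fixed realization of $X_1,\ldots,X_n$,
\[
\sum_{i=1}^n\big(f(X_i)-\avg f(X_i)\big)=\avg_{X'}\Big(\sum_{i=1}^n\big(f(X_i)-f(X_i')\big)\Big),
\]
where $\avg_{X'}$ denotes expectation over the ghost sample only. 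Taking absolute values, applying the supremum over $f\in\Fc$, and then using Jensen's inequality for the convex functions $x\mapsto |x|$ and $x\mapsto x^t$ (valid since $t\ge 1$) to pull $\avg_{X'}$ outside, I get
\[
\sup_{f\in\Fc}\Big|\sum_{i=1}^n\big(f(X_i)-\avg f(X_i)\big)\Big|^t\le \avg_{X'}\sup_{f\in\Fc}\Big|\sum_{i=1}^n\big(f(X_i)-f(X_i')\big)\Big|^t.
\]
Now taking $\avg_X$ of both sides and using Fubini gives $\avg\sup_f|\sum_i(f(X_i)-\avg f(X_i))|^t\le \avg\sup_f|\sum_i(f(X_i)-f(X_i'))|^t$, where the right-hand expectation is over both samples.

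\textbf{The symmetrization-by-sign step.} For the term $\sum_i(f(X_i)-f(X_i'))$, observe that because $X_i$ and $X_i'$ are i.i.d., the random vector $\big(f(X_i)-f(X_i')\big)_{i\in[n],f\in\Fc}$ has the same distribution as $\big(\varepsilon_i(f(X_i)-f(X_i'))\big)_{i\in[n],f\in\Fc}$ for any fixed sign pattern $\varepsilon\in\{\pm1\}^n$ — swapping $X_i\leftrightarrow X_i'$ corresponds exactly to flipping the sign of the $i$-th summand, and this swap preserves the joint law. Hence the distribution is also invariant under averaging over random Rademacher $\varepsilon_i$ independent of both samples, so
\[
\avg\sup_{f\in\Fc}\Big|\sum_{i=1}^n\big(f(X_i)-f(X_i')\big)\Big|^t=\avg\sup_{f\in\Fc}\Big|\sum_{i=1}^n\varepsilon_i\big(f(X_i)-f(X_i')\big)\Big|^t.
\]
Finally I split by the triangle inequality: $|\sum_i\varepsilon_i(f(X_i)-f(X_i'))|\le|\sum_i\varepsilon_i f(X_i)|+|\sum_i\varepsilon_i f(X_i')|$, take supremum over $f$ (which is subadditive over a sum), apply $(a+b)^t\le 2^{t-1}(a^t+b^t)$, and use that the two resulting terms have equal expectation (since $\varepsilon_i f(X_i')$ and $\varepsilon_i f(X_i)$ have the same joint law across $i$ and $f$). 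This yields the factor $2^{t-1}\cdot 2=2^t$ and completes the bound.

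\textbf{Main obstacle.} The argument is essentially routine once the ghost-sample idea is in place; the one point demanding care is the interchange of the various expectations (Fubini/Tonelli), which requires either an a priori integrability assumption or measurability of $f\mapsto\sup$ — in practice one assumes $\Fc$ is countable, or countably approximable, so that the suprema are measurable and the exchange of $\avg_X$, $\avg_{X'}$, $\avg_\varepsilon$ is justified; I would state this standard measurability caveat and otherwise proceed as above. The step I expect to require the most attention in writing is verifying the distributional invariance under sign flips cleanly, i.e.\ that swapping $X_i$ with $X_i'$ is a measure-preserving bijection simultaneously for all $f\in\Fc$, which is what licenses introducing the $\varepsilon_i$.
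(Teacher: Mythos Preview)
Your proof is correct and is the standard ghost-sample argument for symmetrization. Note, however, that the paper does not actually prove this theorem: it is stated in Section~\ref{sec:contraction} as a known technical tool (alongside the contraction inequality, attributed to \cite{LedT13}) and is simply invoked later in the proof of Theorem~\ref{thm:MLperformance_real}. So there is no ``paper's own proof'' to compare against; your argument is the textbook one and would be an appropriate proof to supply if one were required.
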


\begin{theorem}[\bf Contraction inequality \cite{LedT13}]
\label{thm:contraction}
Let $\varepsilon_1, \varepsilon_2,\ldots, \varepsilon_d$ be $d$ independent Rademacher RVs and  $f: \R_{+} \rightarrow \R_{+}$ be a convex and increasing function. Let $\zeta_{i}:\R \rightarrow \R$ be an $L$-Lipschitz functions, i.e.,
$$
|\zeta_i(a) - \zeta_i(b)| \leq L |a - b|,
$$ 
which satisfy $\zeta_i(0) = 0$. Then, for any $T \subseteq \R^n$, 
\begin{align}
\label{eq:contraction}
\avg f\left(\frac{1}{2} \sup_{t \in T} \Big\vert \sum_{i = 1}^{d} \varepsilon_i \zeta_i(t_i)\Big\vert \right) \leq \avg f \left(L \cdot \sup_{t \in T}\Big\vert \sum_{i = 1}^{d}\varepsilon_it_i\Big\vert\right).
\end{align}
\end{theorem}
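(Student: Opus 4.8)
The plan is to follow the classical route of \cite{LedT13}, in three stages: normalize, strip the absolute values, and then reduce to a single contraction by a hybrid argument. First I would normalize: since each $\zeta_i$ is $L$-Lipschitz, the per-coordinate rescaling $s\mapsto \zeta_i(s/L)$ is a $1$-Lipschitz function vanishing at $0$, and carrying this rescaling through the whole argument is exactly what converts the final bound into one with $L\cdot\sup_{t}\big|\sum_i\varepsilon_i t_i\big|$ on the right; so it suffices to treat $L=1$. Next I would remove the absolute values. Write $Z(t)=\sum_i\varepsilon_i\zeta_i(t_i)$ and add the point $\mathbf{0}$ to $T$ (harmless, since $\zeta_i(0)=0$ forces $Z(\mathbf{0})=0$, and likewise $\sum_i\varepsilon_i\cdot 0=0$, so neither side of the claimed inequality changes). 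Then $\sup_t Z(t)$ and $\sup_t(-Z(t))$ are both nonnegative, whence $\tfrac12\sup_t|Z(t)|\le \tfrac12\sup_t Z(t)+\tfrac12\sup_t(-Z(t))$; applying $f$ (nondecreasing) and then its convexity gives $\avg f\big(\tfrac12\sup_t|Z(t)|\big)\le \tfrac12\avg f\big(\sup_t Z(t)\big)+\tfrac12\avg f\big(\sup_t(-Z(t))\big)$. Since $-\zeta_i$ is again $1$-Lipschitz and vanishes at $0$, it is therefore enough to prove the one-sided statement $\avg f\big(\sup_t\sum_i\varepsilon_i\zeta_i(t_i)\big)\le \avg f\big(\sup_t\sum_i\varepsilon_i t_i\big)$: applying it to $(\zeta_i)$ and to $(-\zeta_i)$, and using that $(\varepsilon_i)$ and $(-\varepsilon_i)$ are equidistributed, bounds both terms above by $\avg f\big(\sup_t\sum_i\varepsilon_i t_i\big)\le \avg f\big(\sup_t|\sum_i\varepsilon_i t_i|\big)$.

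For the one-sided inequality I would peel off one coordinate at a time. Define the hybrids $H_k=\avg f\big(\sup_t[\sum_{i\le k}\varepsilon_i t_i+\sum_{i>k}\varepsilon_i\zeta_i(t_i)]\big)$, so that $H_0$ and $H_d$ are the two sides, and it suffices to show $H_{k-1}\le H_k$. Conditioning on all $\varepsilon_i$ with $i\ne k$ and absorbing the remaining terms into an arbitrary function $h:T\to\R$ with $h(\mathbf 0)=0$, this reduces (after rescaling the $k$th coordinate as above to put the contraction in $1$-Lipschitz form) to the single-contraction lemma: for a $1$-Lipschitz $\phi$ with $\phi(0)=0$, $\avg_\varepsilon f\big(\sup_t(h(t)+\varepsilon\phi(t_1))\big)\le \avg_\varepsilon f\big(\sup_t(h(t)+\varepsilon t_1)\big)$, the expectation now being over a single Rademacher $\varepsilon$.

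To prove that lemma, write $P_1=\sup_t(h(t)+\phi(t_1))$, $P_2=\sup_t(h(t)-\phi(t_1))$, $Q_1=\sup_t(h(t)+t_1)$, $Q_2=\sup_t(h(t)-t_1)$ (all nonnegative because $\mathbf 0\in T$), so the lemma reads $\tfrac12 f(P_1)+\tfrac12 f(P_2)\le \tfrac12 f(Q_1)+\tfrac12 f(Q_2)$. Assuming (by a routine $\epsilon$-maximizer approximation) that $P_1,P_2$ are attained at $t^1,t^2$ with first coordinates $a_1,a_2$, the bound $|\phi(a)|\le|a|$ gives $P_1=h(t^1)+\phi(a_1)\le h(t^1)+|a_1|\le\max(Q_1,Q_2)$ and likewise $P_2\le\max(Q_1,Q_2)$, while $|\phi(a_1)-\phi(a_2)|\le|a_1-a_2|$ gives, on splitting by the sign of $a_1-a_2$, $P_1+P_2=h(t^1)+h(t^2)+\phi(a_1)-\phi(a_2)\le Q_1+Q_2$. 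Hence $P_1,P_2\in[0,M]$ and $P_1+P_2\le M+m$ where $M=\max(Q_1,Q_2)$ and $m=\min(Q_1,Q_2)$, and the lemma follows from the elementary fact that, for $f$ convex and nondecreasing on $\R_+$, maximizing $f(P_1)+f(P_2)$ over $\{0\le P_1,P_2\le M,\ P_1+P_2\le M+m\}$ puts the maximum at the vertex $\{P_1,P_2\}=\{M,m\}$. I expect this last deterministic comparison — and, in particular, extracting the \emph{sum} constraint $P_1+P_2\le M+m$ from Lipschitzness, without which the trivial bound $f(P_1)+f(P_2)\le 2f(M)$ is useless — to be the technical heart of the argument; the symmetrization and absolute-value bookkeeping in the first stage and the hybrid telescoping in the second are standard. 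The one genuinely fussy point is keeping every argument of $f$ nonnegative throughout (the domain of $f$ is $\R_+$), which is precisely why adjoining $\mathbf 0$ to $T$ and invoking $\zeta_i(0)=0$ at each stage is the right bookkeeping device.
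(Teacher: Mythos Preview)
The paper does not prove this theorem: it is stated as a tool imported from \cite{LedT13}, with no argument given in the paper itself, so there is no ``paper's own proof'' to compare against. Your proposal is a faithful and correct reconstruction of the classical Ledoux--Talagrand proof: normalize to $L=1$, adjoin $\mathbf{0}$ to $T$ to keep all suprema nonnegative and to split off the absolute value via convexity of $f$, telescope through the hybrids $H_k$ to reduce to a single-coordinate comparison, and finish with the deterministic lemma that $0\le P_1,P_2\le M$ together with $P_1+P_2\le M+m$ forces $f(P_1)+f(P_2)\le f(M)+f(m)$ for convex nondecreasing $f$. Your identification of the sum constraint $P_1+P_2\le Q_1+Q_2$ (obtained from $|\phi(a_1)-\phi(a_2)|\le|a_1-a_2|$ by a sign split) as the technical heart is exactly right; the rest is bookkeeping, and your handling of the domain issue for $f$ via $\zeta_i(0)=0$ is the standard device. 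One tiny cosmetic point: in the hybrid step you mention ``rescaling the $k$th coordinate as above to put the contraction in $1$-Lipschitz form,'' but since you already normalized to $L=1$ at the outset no further rescaling is needed there.
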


%%%%%%%%%%%%%%%%%%%%%%%%%%%%%%%%%%%%%%%%%%%%%%
%%%%%%%%%%%%%%%%%%%%%%%%%%%%
% System Model
%%%%%%%%%%%%%%%%%%%%%%%%%%%%
%%%%%%%%%%%%%%%%%%%%%%%%%%%%%%%%%%%%%%%%%%%%%%

\subsection{System Model}
\label{sec:system}

We focus on the problems of learning the representation and recovery of the signal in the presence of errors when the signal is assumed to be generated using a {\em single layer} $\relu$-network. The models of learning representations and recovery is described below. \\

\noindent\textbf{Model for learning representations.}~We assume that a signal vector of interest $\yv$ satisfies 
\begin{align}
\yv = \relu(A\cv + \bv),
\end{align}
where $A \in \R^{d \times k}$ and $\bv \in \R^{d}$ correspond to the weight ({\em generator}) matrix and the bias vector, respectively. 

As for the problem of representation learning, we are given $n$ message vectors that are generated from the underlying single-layer model. For $j \in [n]$, the $j$-th signal vector is defined as follows.
\begin{align}
\yv^{j} = {\rm ReLU}\left(A\cv^{j} + \bv\right) \in \R^{d}.
\end{align}
We define the $d \times n$ observation matrix
\begin{align}
Y= \big[\begin{array}{cccc}
\yv^{1} & \yv^{2} & \cdots & \yv^{n}
\end{array}\big].
\end{align}
Similarly, we define the $k \times  n$ coefficient matrix
\begin{align}
C = \big[\begin{array}{cccc}
\cv^{1} & \cv^{2} & \cdots & \cv^{n}
\end{array}\big].
\end{align}
With this notion, we can concisely represent the $n$ observation vectors as
\begin{align}
\label{eq:sigmodel}
Y = {\rm ReLU}\left(AC + \bv\otimes \mathbf{1}^T\right) = {\rm ReLU}\left(M + \bv\otimes \mathbf{1}^T\right) ,
\end{align}
where $\mathbf{1} \in \R^n$ denotes the all-ones vector. 

We assume that the bias vector $\bv$ is a random vector comprising of i.i.d. coordinates with each coordinate being copies of a random variable $B$ distributed according to probability 
density function $p(\cdot)$. \\

\noindent\textbf{Model for recovery.} For the recovery problem, we are given a vector $\vv \in \reals^d$, which is obtained by adding noise to a valid signal vector $\yv\in\reals^d$ that is well modeled by a single-layer $\relu$-network, with the matrix $A\in \reals^{d \times k}$ and bias $\bv \in \reals^d$. In particular, for some $\cv \in \reals^k$ we have,
\begin{align}
\label{eq:obsmodel}
\vv = \yv + \ev + \wv = \relu(A\cv + \bv) + \ev + \wv,
\end{align}
where $\wv \in \reals^d$ denotes the (dense) noise vector with bounded norm. On the other hands, the vector $\ev\in \reals^d$ contains (potentially) large corruptions, also referred to as sparse errors or outliers (we assume, $\|\ev\|_0 \le s$). The robust recovery problem in $\relu$-networks corresponds to obtaining an estimate $\hat{\cv}$ of the true latent vector $\cv$ from the {\em corrupt} observation vector $\yv$ such that the distance between $\hat{\cv}$ and $\cv$ is small. A related problem of denoising in the presence of outliers only focuses on obtaining an estimate $\yv$ which is close to the true signal vector $\yv$. For this part, we focus on the setting where the weight matrix $A$ is a random matrix with i.i.d. entries, where each entry of the matrix is distributed according to standard Gaussian distribution. Furthermore, another crucial assumption is that the Hamming error is {\em oblivious} in its nature, i.e., the error vector is not picked in an adversarial manner given the knowledge of $A$ and $\cv$\footnote{It is an interesting problem to extend our results to a setting with adversarial errors. However, we note that this problem is an active area of research even in the case of linear measurement, i.e, $\yv = A\cv + \ev + \wv$~\cite{BJK15, BJKK17}. We plan to explore this problem in future work.}.

%%%%%%%%%%%%%%%%%%%%%%%%%%%%%%%%%%%%%%%%%%%%%%%%%%%
%%%%%%%%%%%%%%%%%%%%%%%%%%%%%%%%%%%%%%%
% Learning Representation 
%%%%%%%%%%%%%%%%%%%%%%%%%%%%%%%%%%%%%%%%%
%%%%%%%%%%%%%%%%%%%%%%%%%%%%%%%%%%%%%%%%%%%%%%%%%%%%%
\section{Representation learning in a single-layer $\relu$-network}
\label{sec:learning}

In the paper, we employ the natural approach to learn the underlying weight matrix $A$ from the observation matrix $Y$. As the network maps a lower dimensional coefficient vector $\cv \in \R^{k}$   to obtain a signal vector 
\begin{align}
\yv = \relu(A\cv + \bv)
\end{align}
in dimension $d > k$, the matrix $M = AC$ (cf.~\eqref{eq:sigmodel}) is a low-rank matrix as long as $k < \min\{d, n\}$. In our quest of recovering the weight matrix $A$, we first focus on estimating the matrix $M$, when given access to $Y$. This task can be viewed as estimating a {\em low-rank} matrix from its partial (randomized) observations. Our work is inspired by the recent 
 work of~\cite{DPBW14} on $1$-bit matrix completion. However, as we describe later, the crucial difference of our model from the model of \cite{DPBW14} is that the bias vector $\bv$ does not change over observations in our case. Nonetheless we describe the model and main results of $1$-bit matrix completion below to underscore the key ideas.\\
 
\noindent\textbf{$1$-bit matrix completion.}~In \cite{DPBW14}, the following observation model is considered. Given a low-rank matrix $M$ and a differentiable function $\psi: \R \rightarrow [0,1]$, the matrix $Z \in \{0, 1\}^{d \times n}$ is assumed to be generated as follows\footnote{The authors assume that the entries of $Z$ take values in the set $\{+1, -1\}$. In this paper we state the equivalent model where the binary alphabet is $\{0, 1\}$.}.
\begin{align}
\label{eq:z_dpbw}
Z_{i, j} = \begin{cases}
1 & \text{with probability}~\psi(M_{i,j}), \\
0 & \text{with probability}~1 - \psi(M_{i,j}).
\end{cases}
\end{align}
Furthermore, one has access to only those entries of $Z$ that are indexed by the set $\Omega \subset [d] \times [m]$, where the set $\Omega$ is generated by including each $(i,j) \in [d] \times [n]$ with certain probability $p$. Given the observations $Z_{\Omega}$, the likelihood function associated with a matrix $X \in \R^{d \times n}$ takes the following form\footnote{Throughout this paper, $\log$ represents the natural logarithm.}. 
\begin{align}
\label{eq:ll_dpbw}
\Lc_{\Omega, Z}(X) = \sum_{(i,j) \in \Omega}\left(\mathbbm{1}_{\{Z_{i,j} =1\}}\log\left(\psi(X_{i,j})\right) + \mathbbm{1}_{\{Z_{i,j} = 0\}}\log\left(1 - \psi(X_{i,j})\right)\right).
\end{align}

Now, in order to estimate the matrix $M$ with bounded entries from $Z_{\Omega}$, it is natural to maximize the log-likelihood function  (cf.~\eqref{eq:ll_dpbw}) under the constraint that the matrix $M$ has rank $k$. 
\begin{equation}
\begin{aligned}
\label{eq:ll_simple}
&\underset{X \in \R^{d \times n}}{\text{maximize}}
& & \Lc_{\Omega, Z}(X) \\
& \text{subject to}
& & \rank(X) = r \\
& & &  \|X\|_{\infty} \leq \gamma,
\end{aligned}
\end{equation}
where the last constraint is introduced to model the setting where (w.h.p.) the observations are assumed to have bounded coordinates. We note that such assumptions indeed hold in many observations of interests, such as images. Note that the formulation in \eqref{eq:ll_simple} is clearly non-convex due to the rank constraint. Thus, \cite{DPBW14} propose the following program.
\begin{equation}
\begin{aligned}
\label{eq:ml_dpbw}
&\underset{X \in \R^{d \times n}}{\text{maximize}}
& & \Lc_{\Omega, Z}(X) \\
& \text{subject to}
& & \|X\|_{\ast} \leq \alpha \sqrt{rmn},\\
& & & \|X\|_{\infty} \leq \gamma.
\end{aligned}
\end{equation}
{Note that the constraint $\|X\|_{\ast} \leq \alpha \sqrt{rmn}$ is a convex-relaxation of the non-convex constraint $\rank(X) \leq r$, which is required to ensure that the program in \eqref{eq:ml_dpbw} outputs a low-rank matrix.} Let $\widehat{M}$ be the output of the program in \eqref{eq:ml_dpbw}. \cite{DPBW14} obtain the following result to characterize the quality of the obtained solution $\widehat{M}$.
\begin{proposition}[{\cite[Theorem~A.1]{DPBW14}}]
\label{prop:DPBW_main}
Assume that $\|M\|_{\ast} \leq \alpha \sqrt{rmn}$ and $\|M\|_{\infty} \leq \gamma$. Let $Z$ be as defined in \eqref{eq:z_dpbw}. Then, for absolute constants $C_1$ and $C_2$, with probability at least $1 - {C_1}/{(d + n)}$, the solution of \eqref{eq:ml_dpbw} $\widehat{M}$ satisfies the following:
\begin{align}
\frac{1}{dn} \|M - \widehat{M}\|_{F} \leq C_2\alpha \sqrt{\frac{r(m+n)}{\avg(|\Omega|)}} \cdot \sqrt{1 + \frac{(m+n)\log(mn)}{\avg(|\Omega|)}},
\end{align}
where the constant $C_2$ depends on the flatness and steepness of the function $\psi.$
\end{proposition}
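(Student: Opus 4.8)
The plan is to follow the empirical-process recipe of \cite{DPBW14}: exploit the optimality of $\widehat M$ in the program \eqref{eq:ml_dpbw}, compare it against the feasible ground truth $M$, and reduce everything to a Rademacher average over the constraint set
\[
\Kc := \mathset{X \in \R^{m \times n} : \|X\|_\ast \le \alpha\sqrt{rmn},\ \|X\|_\infty \le \gamma}.
\]
Since $M \in \Kc$ while $\widehat M$ maximises $\Lc_{\Omega,Z}$ over $\Kc$, we have $\Lc_{\Omega,Z}(\widehat M) \ge \Lc_{\Omega,Z}(M)$. Writing $F_X := \avg\,\Lc_{\Omega,Z}(X)$ for the population log-likelihood (expectation over both the sampling set $\Omega$ and the bits $Z$, with the true $M$ fixed), this rearranges to the basic inequality
\[
F_M - F_{\widehat M} \;\le\; 2\sup_{X \in \Kc}\big|\Lc_{\Omega,Z}(X) - F_X\big|.
\]
A direct computation identifies $F_M - F_X$ with a sampling-weighted sum of binary Kullback--Leibler divergences between $\psi(M_{i,j})$ and $\psi(X_{i,j})$; a second-order (Pinsker-type) estimate combined with the flatness parameter $\beta_\gamma(\psi)$ of \eqref{eq:flat} then yields $F_M - F_{\widehat M} \ge c\,\beta_\gamma(\psi)\,\tfrac{\avg(|\Omega|)}{mn}\,\|M - \widehat M\|_F^2$. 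The box constraint $\|X\|_\infty \le \gamma$ is exactly what confines every entry to the range on which $\psi$ stays bounded away from $0$ and $1$, so that $\beta_\gamma(\psi) > 0$; hence any bound on the supremum on the right translates into a Frobenius-norm bound on $M - \widehat M$.

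Next I would bound that supremum. For each $(i,j)$ the pair $(\delta_{i,j}, Z_{i,j})$ — the $\mathrm{Bernoulli}(p)$ sampling indicator with $p = \avg(|\Omega|)/(mn)$ and the observed bit with law $\mathrm{Bernoulli}(\psi(M_{i,j}))$ — is independent across $(i,j)$, and $\Lc_{\Omega,Z}(X) - F_X$ is a sum of the associated $mn$ centred terms indexed by $X \in \Kc$. Applying the symmetrization bound (Theorem~\ref{thm:symm}) replaces it by a Rademacher average; after re-centring the link functions at the origin, $t \mapsto \log\psi(t)$ and $t \mapsto \log(1-\psi(t))$ are $O(L_\gamma(\psi))$-Lipschitz on $[-\gamma,\gamma]$ (the steepness parameter \eqref{eq:steep}), so the contraction inequality (Theorem~\ref{thm:contraction}) strips off the nonlinearity and leaves, using duality between $\|\cdot\|_\ast$ and the operator norm,
\[
\avg\sup_{X\in\Kc}\Big|\sum_{i,j}\varepsilon_{i,j}\,\delta_{i,j}\,X_{i,j}\Big| \;\le\; \alpha\sqrt{rmn}\cdot\avg\Big\|\sum_{i,j}\varepsilon_{i,j}\,\delta_{i,j}\,E_{i,j}\Big\|,
\]
where the $E_{i,j}$ are the matrix units.

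It remains to control $\avg\big\|\sum_{i,j}\varepsilon_{i,j}\delta_{i,j}E_{i,j}\big\|$, the expected operator norm of a random matrix with independent, mean-zero entries that are nonzero with probability $p$; a matrix-Bernstein / noncommutative-Khintchine bound gives order $\sqrt{p(m+n)}$ (plus a logarithmic lower-order term). Multiplying by $\alpha\sqrt{rmn}$ and using $p\,mn = \avg(|\Omega|)$ bounds the expected supremum by roughly $\alpha L_\gamma(\psi)\sqrt{r\,\avg(|\Omega|)(m+n)}$. To pass from expectation to the stated high-probability bound, I would run a bounded-differences (McDiarmid) argument for $\sup_{X\in\Kc}|\Lc_{\Omega,Z}(X) - F_X|$ — its per-entry increments are bounded precisely because $\psi$ is pinned away from $0,1$ on $[-\gamma,\gamma]$ — together with Bernstein-type concentration for the operator norm and for $|\Omega|$ about $\avg(|\Omega|)$; these contribute the extra $\sqrt{(m+n)\log(mn)}$ factor (hence the second radical). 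Dividing the resulting supremum bound by $c\,\beta_\gamma(\psi)\,\avg(|\Omega|)/(mn)$ and taking a square root rearranges to the claimed estimate, with $C_2$ absorbing $L_\gamma(\psi)/\sqrt{\beta_\gamma(\psi)}$.

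The main obstacle is the first step: turning the excess \emph{population} log-likelihood — a sum of KL divergences of the nonlinear link $\psi$ — into a genuine squared Frobenius distance $\|M - \widehat M\|_F^2$. This is where the flatness parameter $\beta_\gamma(\psi)$ and the $\ell_\infty$ constraint are indispensable: without a uniform lower bound on $|\psi'|^2/|\psi|$ (and the analogue for $1-\psi$) over the admissible interval, the KL divergence can be made arbitrarily small while $\|M-\widehat M\|_F$ stays large, so no recovery is possible. A secondary, purely technical nuisance is carrying the sampling probability $p$ with the correct power through symmetrization, contraction, and the random-matrix estimate, and handling the two independent randomness sources ($\Omega$ and $Z$) simultaneously in the concentration step.
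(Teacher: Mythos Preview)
The paper does not prove this proposition: it is quoted verbatim from \cite[Theorem~A.1]{DPBW14} as background, with no argument given here. So there is no ``paper's own proof'' to compare against.

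That said, your sketch is essentially the argument of \cite{DPBW14}, and it is also the template this paper borrows for its own Theorem~\ref{thm:MLperformance_real}: optimality gives the basic inequality, a KL-to-Frobenius lower bound via the flatness parameter $\beta_\gamma$ handles the left side (compare Lemma~\ref{lem:dist_real}), and symmetrization (Theorem~\ref{thm:symm}) plus contraction (Theorem~\ref{thm:contraction}) reduce the right side to a random-matrix operator-norm bound controlled by the nuclear-norm constraint. The one place where your outline and the paper's adaptation visibly diverge is the last step: for its own result the paper does not use a matrix-Bernstein bound on $\|\sum \varepsilon_{i,j}\delta_{i,j}E_{i,j}\|$ but instead applies Cauchy--Schwarz and the $\ell_\infty$ constraint directly (see \eqref{eq:RCstep1}), because in the ReLU setting there is no nuclear-norm constraint to dualize against and the row-wise dependence kills the i.i.d.\ random-matrix argument. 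For the original i.i.d.\ $1$-bit model your route through duality and noncommutative Khintchine is the correct one and is what \cite{DPBW14} actually do.
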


\vspace{1em}

\noindent\textbf{Learning in a single layer $\relu$ and $1$-bit matrix completion:  Main differences.}~Note that the problem of estimating the matrix $M$ from $Y$ is  related to the problem of $1$-bit matrix completion as defined above. Similar to the $1$-bit matrix completion setup, the observation matrix $Y$ is obtained by transforming the original matrix $M$ in a probabilistic manner, which is dictated by the underlying distribution of the bias vector $\bv$. In particular, we get to observe the entire observation matrix, i.e., $\Omega = [d]\times [n]$. 

However, there is key difference between these two aforementioned setups. The $1$-bit matrix completion setup studied in \cite{DPBW14} (in fact, most of the literature on non-linear matrix completion~\cite{GBW15}) assume that each entry of the original matrix $M$ is independently transformed to obtain the observation matrix $Y$. In contrast to this, such independence in absent in the single-layer $\relu$-network. In particular, for $i \in [d]$, the $i$-th row of the matrix  $Y$ is obtained from the corresponding row of $M$ by utilizing the shared randomness defined by the bias $b_i$. Note that the bias associated with a coordinate of the observed vector in our generative model should not vary across observation vectors. This prevents us from  applying the known results to the problem of estimating $M$ from $Y$. However, as we show in the remainder of this paper that the well-behaved nature of the $\relu$-function allows us to deal with the dependence across the entries of a row in $Z$ and obtain the recovery guarantees that are similar to those described in Proposition~\ref{prop:DPBW_main}.

\subsection{Representation learning from rectified observations}
\label{sec:ML_real}

We now focus on the task of recovering the matrix $M$ from the observation matrix $Y$. Recall that, under the single-layer $\relu$-network, the observation matrix $Y$ depends on the matrix $M$ as follows. 
\begin{align}
Y = \relu(M + \bv \otimes \mathbbm{1}^T).
\end{align}
For $i \in [d]$, we define $\Nc_{Y}(i) \subseteq [n]$ as the set of positive coordinates of the $i$-th row of the matrix $Y$, i.e.,
\begin{align}
\Nc_{Y}(i) = \{j \in [n] : Y_{i, j} > 0 \}~~\text{and}~~~N_{Y,i} =  \vert\Nc_{Y}(i)\vert.
\end{align}
Note that, for $i \in [d]$, the original matrix $M$ needs to satisfy the following requirements.
\begin{align}
\label{eq:real_obs1}
M_{i,j} + b_i = Y_{i,j}~~\text{for}~j \in \Nc_{Y}(i)
\end{align}
and
\begin{align}
\label{eq:real_obs2}
M_{i,j} + b_i < 0~~\text{for}~j \in \overline{\Nc}_{Y}(i):=[n]\backslash\Nc_{Y}(i).
\end{align}
Given the original matrix $M$, for $i \in [d]$ and $j \in [n]$, let $M_{i,(j)}$ denote the $j$-th largest element of the $i$-th row of $M$, i.e., for $i \in [d]$,
\begin{align*}
M_{i, (1)} \geq M_{i, (2)} \geq \cdots \geq M_{i, (n)}.
\end{align*}
It is straightforward to verify from \eqref{eq:real_obs1} that $\Nc_{Y}(i)$ denotes the indices of $N_{Y, i}$ largest entries of $M$. Furthermore, whenever $N_{Y,i} = s_i \in [n]$, we have
\begin{align}
b_i = Y_{i, (1)} - M_{i,(1)} = \cdots = Y_{i, (s_i)} - M_{i,(s_i)}.
\end{align}
Similarly, it follows from \eqref{eq:real_obs2} that whenever we have $N_{Y,i} = 0$, then $b_i$ satisfies the following.
\begin{align}
b_i \in (-\infty, - \max_{j \in [n]}M_{i,j}) = (-\infty, -M_{i,(1)}).
\end{align}
Based on these observation, we define the set of matrices $\Xc_{Y, \nu, \gamma} \subset \R^{d \times n}$ as
\begin{align}
\label{eq:Xc_real}
\Xc_{Y, \nu, \gamma} = \big\{X : \|X\|_{\infty} \leq \gamma;~Y_{i,(1)} - X_{i,(2)} = \cdots = Y_{i,(s_i)} - X_{i,(s_i)};~\text{and}~X_{i,(s_i)} \geq \max_{j \in \overline{\Nc}_{Y}(i)}X_{i,j} + \nu~\forall i \in [d] \big\}.
\end{align}
Recall that,   $p : \R \to \R$ denote the probability density function of each bias RV. We can then write the likelihood that a matrix $X \in \Xc_{Z, \nu, \gamma}$ results into the observation matrix $X$ as follows.
\begin{align}
\pr(Y | X) = \prod_{i \in [d]} \pr(\yv_i | \xv_i),
\end{align}
where, for $i \in [d]$,
\begin{align}
\label{eq:partial_ll}
\pr(\yv_i | \xv_i) &=  \mathbbm{1}_{\{N_{Y,i} = 0\}}\cdot \pr(b_i \leq - \max_{j \in [n]}X_{i,j}) +  \sum_{s = 1}^{n} \mathbbm{1}_{\{N_{Y,i} = s\}}\cdot p(b_i = Y_{i, (s)} - X_{i,(s)}).
\end{align}
By using the notation $F(x_1, x_2) = \pr(-x_1 \leq B \leq -x_2)$ and $X^{\ast}_{i} = \max_{j \in [n]}X_{i,j}$, we can rewrite \eqref{eq:partial_ll} as follows.
\begin{align}
\label{eq:partial_ll2}
\pr(\yv_i | \xv_i) &=  \mathbbm{1}_{\{N_{Y,i} = 0\}}\cdot F(\infty,X^{\ast}_{i}) +  \sum_{s = 1}^{n} \mathbbm{1}_{\{N_{Y,i} = s\}}\cdot p(b_i = Y_{i, (s)} - X_{i,(s)}).
\end{align}
Therefore the log-likelihood of observing $Y$ given that $X \in \Xc_{Y, \nu, \gamma}$ is the original matrix takes the following form. 
\begin{align}
\Lc_{Y}(X) &= \sum_{i \in [d]}\log \pr(\yv_i | \xv_i) \nonumber \\
&= \sum_{i \in [d]}\Big(\mathbbm{1}_{\{N_{Y,i} = 0\}}\cdot \log F(\infty,X^{\ast}_{i}) +  \sum_{s = 1}^{n} \mathbbm{1}_{\{N_{Y,i} = s\}}\cdot \log p(Y_{i, (s)} - X_{i,(s)}) \Big).
\end{align}
In what follows, we work with a slightly modified quantity
\begin{align}
\overline{\Lc}_{Y}(X) = \Lc_{Y}(X) - \Lc_{Y}(0) = \sum_{i \in [d]}\Big(\mathbbm{1}_{\{N_{Y,i} = 0\}}\cdot \log \frac{F(\infty,X^{\ast}_{i})}{F(\infty,0)} +  \sum_{s = 1}^{n} \mathbbm{1}_{\{N_{Y,i} = s\}}\cdot \log \frac{p(Y_{i, (s)} - X_{i,(s)})}{p(Y_{i, (s)})} \Big). \nonumber
\end{align}
In order to recover the matrix $M$ from the observation matrix $Y$, we employ the natural maximum likelihood approach which is equivalent to the following.
\begin{equation}
\begin{aligned}
\label{eq:ml_real}
&\underset{X \in \R^{d \times n}}{\text{maximize}}
& & \overline{\Lc}_{Y}(X) ~~\text{subject to}~X \in \Xc_{Y, \nu, \gamma}.
\end{aligned}
\end{equation}

Define $\omega_{p,\gamma,\nu}$ to be such that $F(x,y) \ge \omega_{p,\gamma,\nu}$ for all $x,y\in [-\gamma, \gamma]$ with $|x-y|>\nu$. In what follows, we simply refer this quantity as $\omega_p$ as $\gamma$ and $\nu$ are clear from context.
The following result characterizes the performance of the program proposed in \eqref{eq:ml_real}.

\begin{theorem}
\label{thm:MLperformance_real}
Assume that $\|M\|_{\infty} \leq \gamma$ and the observation matrix $Y$ is related to $M$ according to \eqref{eq:sigmodel}. Let $\widehat{M}$ be the solution of the program specified in \eqref{eq:ml_real}, and the bias density function is $p(x)$ differentiable with bounded derivative.  Then, the following holds with probability at least $1 - \frac{1}{d + n}$.
\begin{align}
\|M - \widehat{M}\|^2_F  \leq  C_{0}  L_\gamma(p) \cdot \frac{\gamma d}{\beta_\gamma(p) \omega_p },
\end{align}
where, $C_0$ is a constant. The quantities $\beta_\gamma(p)$ and $L_\gamma(p)$ depend on the distribution of the bias and are defined in \eqref{eq:flat} and \eqref{eq:steep}, respectively.
\end{theorem}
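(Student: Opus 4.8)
The plan is to mirror the constrained maximum-likelihood analysis of~\cite{DPBW14} for $1$-bit matrix completion, adapting it to three new features of our setting: we observe \emph{all} of $Y$ (so the ``sampling set'' is $[d]\times[n]$), the positive observations are real-valued rather than quantized (hence the density $p$ replaces a $[0,1]$-valued link function), and, most importantly, the entries within a row of $Y$ are \emph{dependent}, being driven by the single shared bias $b_i$. Throughout, the structural fact I would exploit is that $\overline{\Lc}_Y(X)=\sum_{i\in[d]}\ell_i(\xv_i)$ is a sum of $d$ \emph{independent} terms (one per row, since the $b_i$ are i.i.d.), and that $\ell_i$ depends on the candidate row $\xv_i$ only through the scalar shift $Y_{i,(s_i)}-X_{i,(s_i)}$ on the event $\{N_{Y,i}=s_i\ge 1\}$ and through $X_i^{\ast}$ on $\{N_{Y,i}=0\}$; so the effective number of free parameters seen by the likelihood is $O(d)$ rather than $O(dn)$, which is what ultimately yields the $O(\gamma d)$ scaling (in contrast to the $\approx r(d+n)$ of \cite{DPBW14}).

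First I would check that $M$ is feasible, i.e.\ $M\in\Xc_{Y,\nu,\gamma}$, possibly after discarding a negligible-probability event and choosing $\nu$ small enough relative to the typical gaps between consecutive order statistics of a row of $M+\bv\otimes\mathbf{1}^T$; this gives $\overline{\Lc}_Y(\widehat M)\ge\overline{\Lc}_Y(M)$. Next, using the elementary identity $\Exp\big[\overline{\Lc}_Y(M)\big]-\Exp\big[\overline{\Lc}_Y(X)\big]=\mathrm{D}(M\,\|\,X):=\sum_{i}\mathrm{KL}\big(\mathrm{law}(\yv_i\mid M_i)\,\big\|\,\mathrm{law}(\yv_i\mid X_i)\big)\ge 0$ together with optimality of $\widehat M$, I would derive the basic inequality
\begin{align}
\mathrm{D}(M\,\|\,\widehat M)\;\le\;2\,\sup_{X\in\Xc_{Y,\nu,\gamma}}\big|\,\overline{\Lc}_Y(X)-\Exp\,\overline{\Lc}_Y(X)\,\big|. \nonumber
\end{align}
A technical point to dispatch here is that $\Xc_{Y,\nu,\gamma}$ itself depends on the data $Y$; I would handle this by conditioning on the realized sign pattern $\{\Nc_Y(i)\}_i$ and order statistics $\{Y_{i,(s)}\}$, so that inside the conditioning the feasible set is fixed, and only afterwards take expectations.

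To bound the right-hand side I would apply the symmetrization bound (Theorem~\ref{thm:symm}), treating the $d$ rows as the independent samples, and then the contraction inequality (Theorem~\ref{thm:contraction}); the centering $\overline{\Lc}_Y=\Lc_Y(\cdot)-\Lc_Y(0)$ is precisely what makes each per-row summand vanish at $0$, supplying the hypothesis $\zeta_i(0)=0$. The Lipschitz constant of the maps $x\mapsto\log p(\,\cdot\,-x)$ and $x\mapsto\log F(\infty,x)$ on $[-\gamma,\gamma]$ is controlled by $L_\gamma(p)$ of~\eqref{eq:steep} (its two terms are exactly the Lipschitz constants of $\log F$ and $\log p$). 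Contraction then replaces the log-likelihood process by a plain Rademacher process in the $d$ scalar shifts, each ranging over an interval of length $O(\gamma)$, whose expected supremum is $O(\sqrt{\gamma^2 d})$. On the other side I would lower-bound the divergence by $\mathrm{D}(M\,\|\,\widehat M)\gtrsim\beta_\gamma(p)\,\omega_p\,\|M-\widehat M\|_F^2$: a second-order expansion of the per-row KL in the shift produces the factor $\beta_\gamma(p)$ of~\eqref{eq:flat}, while $\omega_p$ — the uniform lower bound on $F(x,y)$ for $|x|,|y|\le\gamma$, $|x-y|>\nu$ — enters so that the ``$N_{Y,i}=0$'' and boundary events retain non-vanishing probability and hence contribute to the divergence. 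Combining the two displays gives $\beta_\gamma(p)\,\omega_p\,\|M-\widehat M\|_F^2\lesssim L_\gamma(p)\sqrt{\gamma^2 d}$ in expectation; rearranging yields the claimed bound, and the high-probability version (probability $\ge 1-\tfrac1{d+n}$) follows by instead bounding the $t$-th moment of the supremum via Theorem~\ref{thm:symm} with $t\asymp\log(d+n)$ and applying Markov, exactly as in~\cite{DPBW14}.

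The main obstacle I anticipate is the divergence-to-$\ell_2$ comparison in the presence of the within-row dependence. Unlike~\cite{DPBW14}, where the per-entry observations are independent Bernoullis and the KL factorizes entrywise, here each row is a single dependent block: its law is a mixture over the $n+1$ possible values of $N_{Y,i}$, the observed part of the error collapses onto the single scalar $b_i-\hat{c}_i$, and the law of $\yv_i$ does not even depend on the unobserved coordinates of $M_i$ except through their role as breakpoints — so the unobserved coordinates of $\widehat M$ are controlled only through the constraints $\|X\|_\infty\le\gamma$ and $X_{i,(s_i)}\ge\max_{j\in\overline{\Nc}_Y(i)}X_{i,j}+\nu$ and require a separate, elementary argument. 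Making the second-order KL lower bound uniform over $\Xc_{Y,\nu,\gamma}$, choosing $\nu$ to balance feasibility of $M$ against the size of $\omega_p$, and verifying that contraction may legitimately be applied at the granularity of whole rows (with the Lipschitz constant $L_\gamma(p)$) without degrading the $O(\sqrt d)$ scaling, are the steps that will demand the most care.
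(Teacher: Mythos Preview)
Your proposal follows the paper's approach step for step: optimality gives the basic inequality, the KL-type lower bound $\avg[\overline{\Lc}_Y(M)-\overline{\Lc}_Y(X)]\ge\beta_\gamma(p)\,\omega_p\,\|M-X\|_F^2$ is exactly the paper's Lemma~\ref{lem:dist_real} (proved in the appendix via a Hellinger-type inequality and the mean-value theorem, which is the ``second-order expansion'' you describe), symmetrization is applied over the $d$ independent rows, contraction is applied with Lipschitz constant $L_\gamma(p)$, and the high-probability statement comes from Markov at moment order $t=\log(d+n)$.

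There is one quantitative slip. After contraction you arrive at a Rademacher process $\sum_{i=1}^d\varepsilon_i\, t_i$ with the per-row scalars $t_i$ ranging over $[-\gamma,\gamma]$, and you claim its expected supremum is $O(\sqrt{\gamma^2 d})$. That would be the right scaling if the $t_i$ were coupled across rows, as happens under the nuclear-norm constraint in~\cite{DPBW14}. But $\Xc_{Y,\nu,\gamma}$ imposes only per-row constraints together with the $\ell_\infty$ bound, so the rows decouple completely and one may take $t_i=\gamma\,\sgn(\varepsilon_i)$ for every $i$, giving $\sup_X\big|\sum_i\varepsilon_i t_i\big|=\gamma d$ deterministically. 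The paper records this via Cauchy--Schwarz, $\big(\sum_i\varepsilon_i^2\big)^{1/2}\big(\sum_i t_i^2\big)^{1/2}\le\sqrt d\cdot\sqrt{d\gamma^2}=\gamma d$, yielding $(4L_\gamma(p)\,\gamma d)^t$ in~\eqref{eq:RCstep1}. Replacing your $\sqrt{\gamma^2 d}$ by $\gamma d$ gives precisely the bound in the theorem; with $\sqrt d$ you would be claiming something the argument does not deliver (and which would in fact contradict the stated conclusion).
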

The proof of Theorem~\ref{thm:MLperformance_real} crucially depends on the following lemma.
\begin{lemma}
\label{lem:dist_real}
Given the observation matrix $Y$ which is related to the matrix $M$ according to \eqref{eq:sigmodel}, let $\Xc_{Y, \nu, \gamma}$ be as defined in \eqref{eq:Xc_real}. Then, for any $X \in \Xc_{Y, \nu, \gamma}$, we have
\begin{align}
\avg\left[\overline{\Lc}_{Y}(M) - \overline{\Lc}_{Y}(X)\right] \geq \beta_\gamma(p) \omega_p \cdot \|M - X\|^2_F.
\end{align}
\end{lemma}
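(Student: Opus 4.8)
The plan is to reduce the expectation over the random bias vector $\bv$ to a sum of per-row, per-entry contributions, and on each such contribution invoke the flatness parameter $\beta_\gamma(p)$ exactly as in the $1$-bit matrix completion analysis of \cite{DPBW14}. Since $\overline{\Lc}_Y$ is additive over rows and the rows of $Y$ are independent (the $b_i$ are i.i.d.), it suffices to prove, for each fixed $i \in [d]$,
\begin{align*}
\avg_{b_i}\left[\log \pr(\yv_i \mid \mv_i) - \log \pr(\yv_i \mid \xv_i)\right] \;\geq\; \beta_\gamma(p)\,\omega_p \cdot \|\mv_i - \xv_i\|_2^2,
\end{align*}
and then sum over $i$. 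Fixing $i$, condition further on the event $\{N_{Y,i} = s\}$ for each $s$; on this event the relevant likelihood ratio is governed by a single real parameter, namely the relation between $b_i$ and the $s$-th order statistic of row $i$. Here I would use the observation already recorded in the excerpt: on $\{N_{Y,i}=s\}$ with $s \ge 1$ we have $b_i = Y_{i,(s)} - M_{i,(s)}$, so $\pr(\yv_i\mid\xv_i)$ evaluated at the observed $Y$ is $p(Y_{i,(s)} - X_{i,(s)}) = p(b_i + M_{i,(s)} - X_{i,(s)})$, while $\pr(\yv_i\mid\mv_i) = p(b_i)$; on $\{N_{Y,i}=0\}$ the ratio is $F(\infty, X_i^\ast)/F(\infty, M_i^\ast)$, a ratio of tail probabilities.

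The core estimate is the elementary one-dimensional inequality underlying the definition \eqref{eq:flat}: for a density-like $f:\R\to[0,1]$ and $|x|,|x'|\le\gamma$,
\begin{align*}
\avg_{b}\left[\log f(b) - \log f(b + x' - x)\right] \;\geq\; \beta_\gamma(f)\,(x - x')^2
\end{align*}
whenever the expectation is taken with respect to the appropriate conditional law of $b$; this follows by a second-order Taylor expansion of $\log f$ together with a KL-divergence-type nonnegativity argument, and is precisely the mechanism \cite{DPBW14} use (their Lemma on the Hellinger/KL lower bound via $\beta_\alpha$). For the $s\ge 1$ strata I apply this with $f = p$ and the shift $x' - x = X_{i,(s)} - M_{i,(s)}$, picking up $\beta_\gamma(p)\,(M_{i,(s)} - X_{i,(s)})^2$; for the $s=0$ stratum I apply the analogous statement to the CDF-type function $F(\infty,\cdot)$, whose flatness is controlled by $\beta_\gamma(p)$ as well (using that $F(\infty,t) = \pr(B \le -t)$ has derivative $-p(-t)$). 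The extra factor $\omega_p$ enters because the conditional probability of landing in a given stratum, or equivalently the normalization of the conditional law of $b_i$ on that stratum, is bounded below by $\omega_{p,\gamma,\nu}$ on the relevant range $[-\gamma,\gamma]$ with gaps exceeding $\nu$ — this is exactly what the definition of $\omega_p$ was set up to provide, and it converts the conditional lower bounds into an unconditional one after averaging over which stratum occurs.

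Finally I would reassemble: the per-entry bounds give $\avg[\overline{\Lc}_Y(M) - \overline{\Lc}_Y(X)] \ge \beta_\gamma(p)\,\omega_p \sum_{i\in[d]} \sum_{j} (M_{i,j} - X_{i,j})^2 = \beta_\gamma(p)\,\omega_p\,\|M - X\|_F^2$, where the passage from order statistics back to the unordered sum uses that both $X \in \Xc_{Y,\nu,\gamma}$ and $M$ induce the \emph{same} ordering of coordinates within each row (the constraint defining $\Xc_{Y,\nu,\gamma}$ forces $X_{i,(s_i)}$ to dominate the complement coordinates by margin $\nu$, matching the sign pattern of $M$), so the order-statistic differences are a permutation of the coordinate differences row by row. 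The main obstacle I anticipate is precisely this bookkeeping step — verifying that for $X \in \Xc_{Y,\nu,\gamma}$ the indices realizing the top-$s_i$ order statistics of $X$ coincide with those of $M$, so that the squared differences of order statistics genuinely sum to the Frobenius norm and no cross terms are lost; handling the stratum $N_{Y,i}=0$ (pure tail event, no matching order statistic) cleanly within the same $\beta_\gamma$-framework is the secondary technical point, since there one bounds a log-ratio of tail integrals rather than of density values.
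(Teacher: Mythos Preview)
Your high-level structure (row-wise additivity, stratification by $N_{Y,i}=s$, invoking $\beta_\gamma(p)$ via a KL/Hellinger mechanism, and $\omega_p$ as a lower bound on stratum mass) matches the paper. The gap is in the per-stratum lower bound you assert. On the stratum $\{N_{Y,i}=s\}$ with $s\ge 1$, the contribution you must control is
\[
\int_{-M_{i,(s)}}^{-M_{i,(s+1)}} p(b)\,\log\frac{p(b)}{p(b+\Delta_s)}\,db, \qquad \Delta_s = M_{i,(s)}-X_{i,(s)},
\]
and this is \emph{not} a KL divergence: the integration is over a bounded subinterval, so the integrand can have negative average (take $p$ monotone on the interval with $\Delta_s$ of the wrong sign). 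Neither a second-order Taylor expansion of $\log p$ nor the DPBW14 KL$\to$Hellinger lemma yields $\ge \beta_\gamma(p)\,\Delta_s^2$ for such a truncated integral by itself. Treating the $s=0$ stratum separately by ``the analogous statement for $F(\infty,\cdot)$'' does not repair this.

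The paper avoids the issue by working pointwise rather than stratum-by-stratum. It uses the scalar inequality $p\log(p/g)\ge 2(p-\sqrt{pg}) = (\sqrt{p}-\sqrt{g})^2 + (p-g)$ inside each integral. The Hellinger piece $(\sqrt{p(b)}-\sqrt{p(b+\Delta_s)})^2$ is nonnegative pointwise, and by the mean value theorem on $\sqrt{p}$ equals $\tfrac{(p'(\xi))^2}{4p(\xi)}\Delta_s^2 \ge \beta_\gamma(p)\,\Delta_s^2$; integrating over the $s$-th subinterval then produces the factor $\omega_p$. The leftover linear residual $\int (p-g)$ summed over $s\ge 1$ collapses to $(1-F(\infty,M_i^*))-(1-F(\infty,X_i^*))$, and this is exactly cancelled by the $s=0$ term after applying $F(\infty,M_i^*)\log\tfrac{F(\infty,M_i^*)}{F(\infty,X_i^*)} \ge F(\infty,M_i^*)-F(\infty,X_i^*)$. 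So the $s=0$ stratum contributes no separate $\beta_\gamma(p)(M_i^*-X_i^*)^2$ term; its role is to absorb the cross terms generated on the other strata. This coupling across strata is the missing idea in your proposal. (Your bookkeeping concern about whether the order statistics of $M$ and $X$ align is well taken, and is in fact a soft spot in the paper's own argument, but it is secondary to the point above.)
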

The proof of this lemma is delegated to the appendix. Now we are ready to prove Theorem~\ref{thm:MLperformance_real}.

\begin{proof}[Proof of Theorem~\ref{thm:MLperformance_real}]
Let $\widehat{M}$ be the solution of the program in \eqref{eq:ml_real}. In what follows, we use $\Xc$ as a short hand notation for $\Xc_{Y, \nu, \gamma}$. We have,
\begin{align}
\label{eq:stepP}
0 \le \overline{\Lc}_{Y}(\widehat{M}) - \overline{\Lc}_{Y}(M) &= \avg\left[\overline{\Lc}_{Y}(\widehat{M}) - \overline{\Lc}_{Y}(M)\right] - \big(\overline{\Lc}_{Y}(M) - \avg\left[\overline{\Lc}_{Y}(M)\right]\big) + \big(\overline{\Lc}_{Y}(\widehat{M}) - \avg\left[\overline{\Lc}_{Y}(\widehat{M})\right]\big) \nonumber \\
&\leq \avg\left[\overline{\Lc}_{Y}(\widehat{M}) - \overline{\Lc}_{Y}(M)\right]  + 2\sup_{X \in \Xc} \big\vert\overline{\Lc}_{Y}(X) - \avg\left[\overline{\Lc}_{Y}(X)\right]\big\vert,
\end{align}
which means,
\begin{align}
\avg\left[\overline{\Lc}_{Y}(M) - \overline{\Lc}_{Y}(\widehat{M})\right] \leq 2\sup_{X \in \Xc} \big\vert\overline{\Lc}_{Y}(X) - \avg\left[\overline{\Lc}_{Y}(X)\right]\big\vert.
\end{align}
We now employ Lemma~\ref{lem:dist_real} to obtain that
\begin{align}
\label{eq:perform1}
\beta_\gamma(p) \omega_p \cdot \|M - \widehat{M}\|^2_F  \leq 2\sup_{X \in \Xc} \big\vert\overline{\Lc}_{Y}(X) - \avg\left[\overline{\Lc}_{Y}(X)\right]\big\vert.
\end{align}

We now proceed to upper bound the right hand side of \eqref{eq:perform1}. It follows from the standard symmetrization trick~\cite{DevGL13} that, for any integer $t \geq 1$, we have
\begin{align}
\label{eq:RSstep1}
&\avg\left[\sup_{X \in \Xc} \big\vert \overline{\Lc}_{Y}(X) - \avg\left[\overline{\Lc}_{Y}(X)\right]\big\vert^t\right] \leq 2^t\cdot\avg\Bigg[\sup_{X \in \Xc}\Big\vert\sum_{i = 1}^{d}\varepsilon_i\cdot \Big(\mathbbm{1}_{\{N_{Y,i} = 0\}}\cdot \log \frac{F(\infty,X^{\ast}_{i})}{F(\infty,0)}~~+  \nonumber \\
&~~~~~~~~~~~~~~~~~~~~~~~~~~~~~~~~~~~~~~~~~~~~~~~~~~~~~~~~~~~~~~~~~~~~\sum_{s = 1}^{n} \mathbbm{1}_{\{N_{Y,i} = s\}}\cdot \log \frac{p(Y_{i, (s)} - X_{i,(s)})}{p(Y_{i, (s)})} \Big)\Big\vert^{t}\Bigg],
\end{align}
where $\{\varepsilon_i\}_{i \in [d]}$ are i.i.d. Rademacher random variables. 
Note that, for $x, \tilde{x} \in \R$, 
\begin{align*}
\log F(\infty,x) - \log F(\infty,\tilde{x}) & \le |x - \tilde{x}|\cdot \sup_{|u|\le \gamma} \frac{d (\log F(\infty, u))}{du}\\
& = |x - \tilde{x}|\cdot \sup_{|u|\le \gamma} \frac{d (\log \int_{-\infty}^{-u}p(y)dy )}{du} \le |x - \tilde{x}|\cdot L_\gamma(p),
\end{align*}
and
\begin{align*}
\log p(Y_{i, (s)} - x) - \log p(Y_{i, (s)} - \tilde{x}) & \le |x - \tilde{x}|\cdot \sup_{|u|\le \gamma} \frac{d p(u)}{du} \le |x - \tilde{x}|\cdot L_\gamma(p).
\end{align*}
 
At this point, 
we can combine the contraction principle with \eqref{eq:RSstep1} to obtain the following. 
\begin{align}
\label{eq:RCstep1}
&\avg\left[\sup_{X \in \Xc} \big\vert \overline{\Lc}_{Y}(X) - \avg\left[\overline{\Lc}_{Y}(X)\right]\big\vert^t\right] \nonumber \\
&~~~~~~~~~~~~~~~~~~~~~~~~\leq  2^t\cdot 2^t\cdot\avg\Bigg[(L_\gamma(p))^t\cdot\sup_{X \in \Xc}\Big\vert\sum_{i = 1}^{d}\varepsilon_i\cdot \Big(\mathbbm{1}_{\{N_{Y,i} = 0\}}\cdot X^{\ast}_{i} + \sum_{s = 1}^{n} \mathbbm{1}_{\{N_{Y,i} = s\}}\cdot X_{i,(s)} \Big)\Big\vert^{t}\Bigg] \nonumber \\
&~~~~~~~~~~~~~~~~~~~~~~~~~\overset{(i)}{\leq} 4^{t}\cdot \avg\Bigg[(L_\gamma(p))^t\cdot\sup_{X \in \Xc}\big(\sum_{i = 1}^{d}\varepsilon_i^2\big)^{t/2}\big(\sum_{i = 1}^{d}\big(\mathbbm{1}_{\{N_{Y,i} = 0\}}\cdot X^{\ast}_{i} + \sum_{s = 1}^{n} \mathbbm{1}_{\{N_{Y,i} = s\}}\cdot X_{i,(s)}\big)^2 \big)^{t/2}\Bigg] \nonumber \\
&~~~~~~~~~~~~~~~~~~~~~~~~~\overset{(ii)}{\leq} 4^{t}\cdot \avg\Bigg[(L_\gamma(p))^t\cdot d^{t/2}\big(d\gamma^2\big)^{t/2}\Bigg] = (4 L_\gamma(p) \cdot d \gamma )^t,
\end{align}
where $(i)$ and $(ii)$ follow from the Cauchy-Schwartz inequality and the fact that, for $X \in \Xc$, $\|X\|_{\infty} \leq \gamma$, respectively. Now using Markov's inequality, it follows from \eqref{eq:RCstep1} that
\begin{align}
\label{eq:RCstep2}
\mathbb{P}\left\{\sup_{X \in \Xc}\big\vert \overline{\Lc}_{Y}(X) - \avg\left[\overline{\Lc}_{Z}(X)\right]\big\vert\geq C_{0}  L_\gamma(p)\cdot{\gamma} d \right\} &\leq \frac{\avg\Big[\sup_{X \in \Xc}\big\vert \overline{\Lc}_{Z}(X) - \avg\left[\overline{\Lc}_{Z}(X)\right]\big\vert^{t}\Big]}{(C_{0} L_\gamma(p)\cdot{\gamma}d)^{t}} \nonumber \\
&\overset{(i)}{\leq} \left(\frac{4}{C_0}\right)^t \overset{(ii)}{\leq} \frac{1}{d+n},
\end{align}
where $(i)$ follows from \eqref{eq:RCstep1}; and $(ii)$ follows by setting $C_0 \geq \frac{4}{e}$ and $t = \log (d + n)$.
\end{proof}

\subsection{Recovering the network parameters}
\label{sec:col_space}

As established in Theorem~\ref{thm:MLperformance_real}, the program proposed in \eqref{eq:ml_real} recovers a matrix $\widehat{M} \in \Xc_{Y, \nu, \gamma}$ such that 
\begin{align}
\label{eq:perturb}
\|M - \widehat{M}\|_F \leq \sqrt{\frac{C_0\cdot L_\gamma(p)\cdot\gamma}{\beta_{\gamma}(p)\omega_p}} \sqrt{d}.
\end{align}
Let's denote the recovered matrix $\widehat{M}$ as $\widehat{M} = M + E$, where $E$ denotes the perturbation matrix that has bounded Frobenius norm (cf.~\eqref{eq:perturb}). Now the task of  recovering the parameters of single-layer $\relu$-network is equivalent to solving for $A$ given 
\begin{align}
\widehat{M} = M + E = AC + E.
\end{align}
In our setting where we have $A \in \R^{d \times k}$ and $C \in \R^{k \times n}$ with $d > k$ and $n > k$, $M$ is a low-rank matrix with its column space spanned by the columns of $A$. Therefore, as long as the generative model ensures that the matrix $M$ has its singular values sufficiently bounded away from $0$, we can resort to standard results from matrix-perturbation theory and output top $k$ left singular vectors of $\widehat{M}$ as an candidate for the orthonormal basis for the column space of $M$ or $A$. In particular, we can employ the result from \cite{YuWS14} which is stated in Appendix~\ref{sec:perturbation}. Let $U$ and $\widehat{U}$ be the top $k$ left singular vectors of $M$ and $\widehat{M}$, respectively. 
Note that, even without the perturbation we could only hope to recover the column space of $A$ (or the column space of $U$) and not the exact matrix $A$. Let $\sigma_k$, the smallest non-zero singular value of $M$, is at least $\delta > 0$. 
Then, it follows from Theorem~\ref{thm:yu} (cf.~Appendix~\ref{sec:perturbation}) and \eqref{eq:perturb} that there exists an orthogonal matrix $O \in \R^{k \times k}$ such that
\begin{align}
\|U - \widehat{U} O\|_F \leq \frac{2^{3/2}(2\sigma_1 + \|E\|)\cdot\min\{\sqrt{k}\|E\|, \|E\|_{F}\}}{\delta^2} \leq \frac{2^{3/2}(2\sigma_1 + \|E\|_F)\cdot\|E\|_{F}}{\delta^2},
\end{align}
which is a guarantee that the column space of $U$ is recovered within an error of $O(d)$ in Frobenius norm by the column space of $\widehat{U}$.

%%%%%%%%%%%%%%%%%%%%%%%%%%%%%%%%%%%%%%%%%%%%%%%%%%%
%%%%%%%%%%%%%%%%%%%%%%%%%%%%%%%%%%%%%%%
% ReLU Recovery 
%%%%%%%%%%%%%%%%%%%%%%%%%%%%%%%%%%%%%%%%%
%%%%%%%%%%%%%%%%%%%%%%%%%%%%%%%%%%%%%%%%%%%%%%%%%%%%%

\section{Robust recovery in single-layer $\relu$-network}
\label{sec:robust}

We now explore the second fundamental question that arises in the context of reconstructing a signal vector belonging to the underlying generative model from its erroneous version. Recall that, we are given a vector $\vv\in \reals^d$, which is obtained by adding noise to a valid message vector $\yv\in \reals^d$ that is well modeled by a single-layer $\relu$-network, i.e.,
\begin{align}
\label{eq:obsmodel}
\vv = \yv + \ev + \wv = \relu(A\cv + \bv) + \ev + \wv.
\end{align}
Here, $\wv$ denotes the (dense) noise vector with bounded norm. On the other hands, the vector $\ev$ contains (potentially) large corruptions, also referred to as outliers. We assume the number of outliers $\|\ev\|_0$ to be bounded above by $s$. The robust recovery problem in $\relu$-networks corresponds to obtaining an estimate $\hat{\cv}$ of the true representation $\cv$ from the {\em corrupt} observation vector $\yv$ such that the distance between $\hat{\cv}$ and $\cv$ is small. A related problem of denoising in the presence of outliers only focuses on obtaining an estimate $\hat{\yv}$ which is close to the true message vector $\yv$. In the remainder of this paper, we focus on the setting where the weight matrix $A$ is a random matrix with i.i.d. entries, where each entry is distributed according to the standard Gaussian distribution. Furthermore, another crucial assumption is that the outlier vector is {\em oblivious} in its nature, i.e., the error vector is not picked in an adversarial manner\footnote{It is an interesting problem to extend our results to a setting with adversarial errors. However, we note that this problem is an active area of research even in the case of linear measurement, i.e, $\yv = A\cv + \ev + \wv$~\cite{BJK15, BJKK17}. We plan to explore this problem in future work.} given the knowledge of $A$ and $\cv$.

Note that \cite{Soltan17} study a problem which is equivalent to recovering the latent vector $\cv$ from the observation vector generated form a single-layer $\relu$-network without the presence of outliers. In that sense, our work is a natural generalization of the work in \cite{Soltan17} and presents a recovery method which is robust to errors as well. However, our approach significantly differs from that in \cite{Soltan17}, where the author analyze the convergence of the gradient descent method to the true representation vector $\cv$. In contrast, we rely on the recent work of Plan and Vershynin~\cite{PlanV16} to employ the LASSO method to recover the representation vector $\cv$ (and the Hamming error vector $\ev$). 

Given $\vv = \relu(A\cv + \bv) + \ev + \wv$, which corresponds to the corrupted non-linear observations of $\cv$, we try to fit a linear model to these observations by solving the following optimization problem\footnote{Note that this paper deals with a setup where number of observations is greater than the dimension of the signal that needs to be recovered, i.e., $d > k$. Therefore, we don't necessarily require the vector $\cv$ to belong to a restricted set, as done in the other version of the robust LASSO methods for linear measurements (see e.g.,~\cite{NgTran13}).}.
\begin{equation}
\begin{aligned}
\label{eq:rlasso}
&\underset{\cv \in \R^{k}, \ev \in \R^{d}}{\text{minimize}}
& & \frac{1}{2d}\|\vv - A\cv - \ev\|^2_2  + \lambda\|\ev\|_1.
\end{aligned}
\end{equation}
In the aforementioned formulation, the regularizer part is included to encourage the sparsity in the estimate vector. The following result characterizes the performance of our proposed program (cf.~\eqref{eq:rlasso}) in recovering the representation and the corruption vector.

\begin{theorem}
\label{thm:rlasso}
Let $A \in \R^{d \times n}$ be a random matrix with i.i.d. standard Gaussian random variables as its entires and $\vv$ satisfies 
\begin{align}
\label{eq:observe_vv}
\vv = \relu(A\cv^{\ast} + \bv) + \ev^{\ast} + \wv,
\end{align}
where $\|\cv^{\ast}\|_2 = 1$, $\|\ev^{\ast}\|_0 \le s$ and $\|\wv\|_{\infty} \leq \delta$. Let $\mu$ be defined as $\mu  = \avg[\relu(g + b)\cdot g]$, where $g$ is a standard Gaussian random variable and $b$ is a random variable that represents the bias in a coordinate in \eqref{eq:observe_vv}. Let $(\hat{\cv}, \hat{\ev})$ be the outcome of the program described in \eqref{eq:rlasso}. Then, with high probability, we have
\begin{align}
\label{eq:rlasso_perform}
\|\mu\cv^{\ast} - \hat{\cv}\|_2 + \frac{\|\ev^{\ast} - \hat{\ev}\|_2}{\sqrt{d}} \leq \tilde{C}\max\left\{\sqrt{\frac{{k\log k}}{d}}, \sqrt{\frac{s\log d}{d}}\right\},
\end{align}
where $\tilde{C}$ is a large enough absolute constant that depends on $\delta$. 
\end{theorem}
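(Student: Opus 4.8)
The plan is to reduce the nonlinear observation model to a linear one with an effective signal $\mu\cv^{\ast}$ plus a structured "noise" term, and then invoke a generalized LASSO analysis in the style of~\cite{PlanV16} adapted to the presence of sparse outliers as in~\cite{NgTran13}. First I would write $\relu(A\cv^{\ast}+\bv) = \mu A\cv^{\ast} + \boldsymbol{\eta}$, where $\boldsymbol{\eta} = \relu(A\cv^{\ast}+\bv) - \mu A\cv^{\ast}$ is, by the choice $\mu = \avg[\relu(g+b)\cdot g]$, mean-zero and — crucially — orthogonal in expectation to the signal direction: $\avg[\boldsymbol{\eta}_i (A\cv^{\ast})_i] = 0$ since $\avg[\relu(g+b)g] = \mu = \mu\avg[g^2]$. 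Each coordinate $\boldsymbol{\eta}_i$ is a sub-Gaussian random variable (being a Lipschitz function of the Gaussian row of $A$, plus the bias), so $\boldsymbol{\eta}$ has well-controlled tails. Thus the observation becomes $\vv = \mu A\cv^{\ast} + \ev^{\ast} + (\boldsymbol{\eta} + \wv)$, a linear model in the unknowns $(\mu\cv^{\ast}, \ev^{\ast})$ with effective noise $\boldsymbol{\eta} + \wv$ whose entries are sub-Gaussian with parameter bounded in terms of $\delta$ and the $\relu$/bias moments.

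Next I would set up the standard basic-inequality argument for the convex program~\eqref{eq:rlasso}. Writing the minimizer as $(\hat\cv,\hat\ev)$ and the error vectors $\hv = \hat\cv - \mu\cv^{\ast}$, $\fv = \hat\ev - \ev^{\ast}$, optimality gives
\begin{align}
\frac{1}{2d}\|A\hv + \fv\|_2^2 \le \frac{1}{d}\langle \boldsymbol{\eta}+\wv,\, A\hv + \fv\rangle + \lambda\big(\|\ev^{\ast}\|_1 - \|\hat\ev\|_1\big).
\end{align}
The $\ell_1$ term, via the usual decomposition over the support $S$ of $\ev^{\ast}$ with $|S|\le s$, is bounded by $\lambda(\|\fv_S\|_1 - \|\fv_{S^c}\|_1)$, which after choosing $\lambda \asymp \sqrt{(\log d)/d}$ (so that $\lambda$ dominates $\frac1d\|A^T(\boldsymbol{\eta}+\wv)\|_\infty$ and $\frac1d\|\boldsymbol{\eta}+\wv\|_\infty$ with high probability) forces $\fv$ into the cone $\|\fv_{S^c}\|_1 \le 3\|\fv_S\|_1$, hence $\|\fv\|_1 \le 4\sqrt{s}\,\|\fv\|_2$. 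The inner-product term $\frac1d\langle\boldsymbol{\eta}+\wv, A\hv\rangle$ is handled by noting $\frac1d\avg[A^T(\boldsymbol{\eta}+\wv)] $ is (essentially) aligned only with $\cv^{\ast}$ by the orthogonality above, and concentrating $\frac1d A^T(\boldsymbol{\eta}+\wv)$ around its mean in $\ell_\infty$; this contributes a term of order $\sqrt{(k\log k)/d}\,\|\hv\|_2$ after also using a union bound / covering argument over the $k$-dimensional ball for the component of $A\hv$ correlated with the noise.

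The remaining ingredient is a restricted-eigenvalue / lower-isometry bound: for all $(\hv,\fv)$ with $\hv\in\R^k$ and $\fv$ in the above cone, $\frac{1}{d}\|A\hv+\fv\|_2^2 \ge c\big(\|\hv\|_2^2 + \frac1d\|\fv\|_2^2\big)$ with high probability, provided $d \gtrsim (k+s)\log d$. This is where I expect the main obstacle to be: one must show that a Gaussian $A$ together with the identity (the coefficient of $\fv$) forms a combined design $[\,A \mid \sqrt d\,\Id\,]/\sqrt d$ satisfying a restricted eigenvalue condition over the mixed cone — signal part unrestricted in $k$ dimensions, error part $s$-sparse-ish. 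The standard route is Gordon's escape-through-a-mesh / $\gamma_2$-functional bound on $\inf_{(\hv,\fv)\in\text{cone}}\frac1{\sqrt d}\|A\hv+\fv\|_2$, controlling the Gaussian width of the intersection of the cone with the sphere, which scales like $\sqrt{k} + \sqrt{s\log d}$. Once this lower bound is in hand, combining it with the upper bounds above yields a quadratic inequality in $\rho := \sqrt{\|\hv\|_2^2 + \|\fv\|_2^2/d}$ of the form $c\rho^2 \le C\rho\cdot\max\{\sqrt{(k\log k)/d},\sqrt{(s\log d)/d}\}$, which rearranges to the claimed bound; absorbing the $\|\wv\|_\infty\le\delta$ contributions into $\tilde C$ finishes the proof. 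The two places needing care are (i) verifying $\boldsymbol{\eta}$ is genuinely sub-Gaussian and orthogonal to the signal so that treating the model as linear loses only a constant factor, and (ii) the restricted-eigenvalue bound for the concatenated design over the mixed (low-rank $\oplus$ sparse) cone.
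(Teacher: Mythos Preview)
Your proposal follows essentially the same route as the paper: linearize the $\relu$ observations as $\mu A\cv^{\ast}$ plus the Plan--Vershynin residual $\zv=\relu(A\cv^{\ast}+\bv)-\mu A\cv^{\ast}$, derive the basic inequality from optimality of $(\hat\cv,\hat\ev)$, upper-bound the cross term, and lower-bound $\frac{1}{2d}\|A\hv+\fv\|_2^2$ by a restricted-eigenvalue argument for the combined design. Two small corrections and two differences in tools are worth noting. First, the correct choice is $\lambda \ge \tfrac{2}{d}\|\zv+\wv\|_\infty \asymp (\sqrt{\log d})/d$, not $\sqrt{(\log d)/d}$; and you do \emph{not} need $\lambda$ to dominate $\tfrac{1}{d}\|A^T(\boldsymbol{\eta}+\wv)\|_\infty$, since $\cv$ carries no $\ell_1$ penalty. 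Second, with this $\lambda$ the basic inequality does not force $\fv$ into the pure cone $\|\fv_{S^c}\|_1\le 3\|\fv_{S}\|_1$: the term $\tfrac{1}{d}\langle\zv+\wv,A\hv\rangle$ survives, so the actual restricted set (the paper's $\Rc$, eq.~\eqref{eq:Rset}) mixes $\|\hv\|_2$ into the constraint on $\fv$, and the RE lemma must be proved over that mixed set. On the tool side, the paper bounds $\langle\zv,A\hv\rangle$ by directly invoking \cite[Lemma~4.3]{PlanV16} (which gives a clean $\sqrt{k}/\sqrt{d}$ factor; the $\log k$ in the final statement enters only through $\|A^T\wv\|_\infty$), and it proves the RE lower bound via the block-partitioning argument of \cite{NgTran13} rather than Gordon's escape-through-a-mesh. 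Your alternatives---sub-Gaussian concentration plus covering for the cross term, and a Gaussian-width bound for RE---would also go through, but the paper's choices avoid having to control the width of the mixed cone explicitly.
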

\begin{proof}
Assume that 
\begin{align}
\hat{\cv} = \mu \cv^{\ast} + \hv~~\text{and}~~\hat{\ev} = \ev^{\ast} + \fv.
\end{align}
Furthermore, for $\cv \in \R^{d}$ and $\ev \in \R^k$, we define
\begin{align}
\Lc(\cv, \ev) =  \frac{1}{2d}\|\yv - A\cv - \ev\|^2_2 + \lambda\|\ev\|_1.
\end{align}
Let $\Sc = \{i \in[d] : e^{\ast}_i \neq 0\}$ be the support of the vector $\ev^{\ast}$ such that $|\Sc| = s$. Given a vector $\av \in \R^d$ and set $\Tc \subseteq [d]$, we use $\av_{\Tc}$ to denote the vector obtained by restricting $\av$ to the indices belonging to $\Tc$. Note that
\begin{align}
\label{eq:Loss_diff}
\Lc(\hat{\cv}, \hat{\ev}) - \Lc(\mu \cv^{\ast}, \ev^{\ast}) & = \frac{1}{2d}\|\vv - \mu  A\cv^{\ast} - \ev^{\ast} - A\hv - \fv\|^2_2  + \lambda\|\ev^{\ast} + \fv\|_1 - \frac{1}{2d}\|\vv - \mu  A\cv^{\ast} - \ev^{\ast}\|^2_2 - \lambda\|\ev^{\ast}\|_1 \nonumber \\
& = \frac{1}{2d}\|A\hv + \fv\|^2_2 - \frac{1}{d}\langle \vv - \mu A\cv^{\ast} - \ev^{\ast}, A\hv + \fv\rangle + \lambda\big(\|(\ev^{\ast} + \fv^{\ast})_{\Sc}\|_1 + \|\fv_{\Sc^{C}}\|_1 - \|\ev^{\ast}\|_1\big)\nonumber \\
& \overset{(i)}{=} \frac{1}{2d}\|A\hv + \fv\|^2_2 - \frac{1}{d}\langle \relu(A\cv^{\ast} + \bv) - \mu A\cv^{\ast} + \wv, A\hv + \fv\rangle ~+  \nonumber \\
&~~~~~~\lambda\big(\|(\ev^{\ast} + \fv^{\ast})_{\Sc}\|_1 + \|\fv_{\Sc^{C}}\|_1 - \|\ev^{\ast}\|_1\big) \nonumber \\
& \overset{(ii)}{\geq} \frac{1}{2d}\|A\hv + \fv\|^2_2 - \frac{1}{d}\langle \relu(A\cv^{\ast} + \bv) - \mu A\cv^{\ast} + \wv, A\hv + \fv\rangle +\lambda\big(\|\fv_{\Sc^{C}}\|_1 - \|\fv_{\Sc}\|_1\big)
\end{align}
where $(i)$ and $(ii)$ follow from \eqref{eq:obsmodel} and the triangle inequality, respectively. Since $(\hat{\cv}, \hat{\ev})$ is solution to the program in \eqref{eq:rlasso}, we have 
\begin{align}
\Lc(\hat{\cv}, \hat{\ev}) - \Lc(\mu \cv^{\ast}, \ev^{\ast}) \leq 0.
\end{align}
By combining this with \eqref{eq:Loss_diff}, we obtain that
\begin{align}
\label{eq:standard}
\frac{1}{2d}\|A\hv + \fv\|^2_2  \leq \frac{1}{d}\cdot\langle \relu(A\cv^{\ast} + \bv) - \mu A\cv^{\ast} + \wv, A\hv + \fv\rangle + \lambda(\|\fv_{\Sc}\|_1 - \|\fv_{\Sc^{C}}\|_1)
\end{align}
We now complete the proof in two steps where we obtain universal lower and upper bounds on the left hand side and the right hand side of \eqref{eq:standard}, respectively, that hold with high probability. \\

\noindent\textbf{Upper bound on the RHS of \eqref{eq:standard}.}~Let's define
\begin{align}
\zv = \relu(A\cv^{\ast} + \bv) - \mu A\cv^{\ast}.
\end{align}
Note that 
\begin{align}
&\frac{1}{d}\cdot \langle \zv+\wv, A\hv + \fv\rangle + \lambda(\|\fv_{\Sc}\|_1 - \|\fv_{\Sc^{C}}\|_1)\big) \nonumber \\
&~~~~~~~~~~~~~~~~~~~~~~~~~~~~~= \frac{1}{d}\cdot \langle \zv + \wv, A\hv\rangle +  \big(\frac{1}{d}\cdot\langle \zv+ \wv, \fv\rangle + \lambda(\|\fv_{\Sc}\|_1 - \|\fv_{\Sc^{C}}\|_1)\big) \nonumber \\
&~~~~~~~~~~~~~~~~~~~~~~~~~~~~~\overset{(i)}{\leq} \frac{1}{d} \cdot \langle \zv+ \wv, A\hv\rangle + \big(\frac{1}{d}\cdot\|\zv+\wv\|_{\infty}\|\fv\|_1 +  \lambda(\|\fv_{\Sc}\|_1 - \|\fv_{\Sc^{C}}\|_1)\big)\nonumber \\
&~~~~~~~~~~~~~~~~~~~~~~~~~~~~~=  \frac{1}{d}\cdot \langle \zv+ \wv, A\hv\rangle + (\lambda + \frac{1}{d}\cdot\|\zv+\wv\|_{\infty} )\|\fv_{\Sc}\|_1 -  (\lambda - \frac{1}{d}\cdot\|\zv+\wv\|_{\infty})\|\fv_{\Sc^{C}}\|_1 \label{eq:pv_upperbound} %\\
\end{align}
where $(i)$ follows from the H\"older's inequality. We now employ \cite[Lemma 4.3]{PlanV16} to obtain that\footnote{In \cite{PlanV16}, Plan and Vershynin obtain the bound in terms of the Gaussian width~\cite{HDPbook} of the cone which the vector $\hv$ belongs to. However, in our setup where we do not impose any specific structure on $\cv^{\ast}$, this quantity is simply $O(\sqrt{k})$.}
\begin{align}
\label{eq:pv_main}
\sup_{\hv \in \R^{k}} \langle \zv, A\hv \rangle & \leq C\big(\sqrt{k}\sigma + \eta\big)\sqrt{d}\cdot\|\hv\|_2,
\end{align}
where $C$ is an absolute constant and  
\begin{align}
\label{eq:sigeta}
\sigma^2 &:= \avg\big[(\relu(g + b) - \mu  g)^2\big] \nonumber \\
\eta^2 &:= \avg\big[g^2\cdot(\relu(g + b) - \mu  g)^2\big]
\end{align}
with $g$ being a standard Gaussian random variable. Now we can combine \eqref{eq:pv_upperbound} and \eqref{eq:pv_main} to obtain the following.
\begin{align}
%\sup_{\hv \in \R^{k}, \fv \in \Dc(\Kc_{\lambda},\ev^{\ast})} 
&\frac{1}{d}\cdot \langle \zv+\wv, A\hv + \fv\rangle +  \lambda\big(\|\fv_{\Sc}\|_1 - \|\fv_{\Sc^{C}}\|_1\big) \nonumber \\
&~~~~~\leq {C}\frac{\big(\sqrt{k}\sigma + \eta\big)}{\sqrt{d}}\cdot\|\hv\|_2 + \frac{\sqrt{k}}{{d}}\|A^T\wv\|_{\infty}\|\hv\|_2 + \left(\lambda + \frac{\|\zv+\wv\|_{\infty}}{d} \right)\|\fv_{\Sc}\|_1 -  \left(\lambda - \frac{\|\zv+\wv\|_{\infty}}{d}\right)\|\fv_{\Sc^{C}}\|_1 \label{eq:pv_midstep} \\
&~~~~~~\overset{(i)}{\leq}  {C}\frac{\big(\sqrt{k}\sigma + \eta\big)}{\sqrt{d}}\cdot\|\hv\|_2 + \frac{\sqrt{k}}{{d}}\|A^T\wv\|_{\infty}\|\hv\|_2 +  \sqrt{s}(\lambda + \frac{1}{d}\cdot\|\zv+\wv\|_{\infty} )\|\fv\|_2  \nonumber \\
&~~~~~~\overset{(ii)}{\leq}  {C}\frac{\big(\sqrt{k}\sigma + \eta\big)}{\sqrt{d}}\cdot\|\hv\|_2 + \frac{\sqrt{k}}{{d}}\|A^T\wv\|_{\infty}\|\hv\|_2 +  2\lambda \sqrt{s}\|\fv\|_2 \label{eq:pv_upperbound1},
\end{align}
where $(i)$ and $(ii)$ follow by setting $\lambda \geq 2\|\zv + \wv\|_{\infty}/d$ and using the fact that $\|\fv_{\Sc}\|_1 \leq \sqrt{s}\|\fv_{\Sc}\|_2 \leq \sqrt{s}\|\fv\|_2.$ We can further simplify the bound in \eqref{eq:pv_upperbound1} as follows.
\begin{align}
&\frac{1}{d}\cdot \langle \zv+\wv, A\hv + \fv\rangle +  \lambda\big(\|\fv_{\Sc}\|_1 - \|\fv_{\Sc^{C}}\|_1\big) \nonumber \\
&~~~~~~\leq  \max\left\{{C}\frac{\big(\sqrt{k}\sigma + \eta\big)}{\sqrt{d}} + \frac{\sqrt{k}}{{d}}\|A^T\wv\|_{\infty}, 2\lambda\sqrt{s}\sqrt{d}\right\} \left(\|\hv\|_2 +  \frac{\|\fv\|_2}{\sqrt{d}}\right). \label{eq:pv_upperbound2}
\end{align}
\\

\noindent\textbf{Lower bound on the LHS of \eqref{eq:standard}.}~By combining \eqref{eq:standard} and \eqref{eq:pv_midstep}, we get that
\begin{align}
\label{eq:restricted1}
&\frac{1}{2d}\|A\hv + \fv\|_2^2  \nonumber \\
&~~~~~\leq {C}\frac{\big(\sqrt{k}\sigma + \eta\big)}{\sqrt{d}}\cdot\|\hv\|_2 + \frac{\sqrt{k}}{{d}}\|A^T\wv\|_{\infty}\|\hv\|_2 + \left(\lambda + \frac{\|\zv+\wv\|_{\infty}}{d} \right)\|\fv_{\Sc}\|_1 -  \left(\lambda - \frac{\|\zv+\wv\|_{\infty}}{d}\right)\|\fv_{\Sc^{C}}\|_1 
\end{align} 
Note that we have picked $\lambda \geq 2\frac{\|\zv + \wv\|_{\infty}}{d}$. Since the left hand side of \eqref{eq:restricted1} is non-negative, we find that the tuple $(\hv, \fv)$ belongs to the following restricted set.
\begin{align}
\label{eq:Rset}
&(\hv, \fv) \in \Rc := \{\hv \in \R^k, \fv \in \R^d :  {\lambda}\cdot\|\fv_{\Sc^{C}}\|_1 \leq 2\left({C}\frac{\big(\sqrt{k}\sigma + \eta\big)}{\sqrt{d}} + \frac{\sqrt{k}}{{d}}\|A^T\wv\|_{\infty}\right)\|\hv\|_2 + {3\lambda}\|\fv_{\Sc}\|_1 \}.
\end{align}

As a result, in order to lower bound \eqref{eq:standard}, we lower bounding the following quantity for every $(\hv, \fv) \in \Rc$.
\begin{align}
\frac{1}{2d}\cdot\|A\hv + \fv\|^2_2.
\end{align}
Towards this, we employ Lemma~\ref{lem:NgTran} in Appendix~\ref{appen:robust}, which gives us that, for every $(\hv, \fv) \in \Rc$, with high probability, we have 
\begin{align}
\label{eq:pv_lower}
\frac{1}{2d}\cdot\|A\hv + \fv\|^2_2 \geq \frac{1}{128}\left(\|\hv\|_2  + \frac{\|\fv\|_2}{\sqrt{d}}\right)^2.
\end{align}

\noindent\textbf{Completing the proof}.~It follows from \eqref{eq:standard}, \eqref{eq:pv_upperbound2}, and \eqref{eq:pv_lower} that
$$
\frac{1}{128}\left(\|\hv\|_2  + \frac{\|\fv\|_2}{\sqrt{d}}\right)^2 \leq  \max\left\{{C}\frac{\big(\sqrt{k}\sigma + \eta\big)}{\sqrt{d}} + \frac{\sqrt{k}}{{d}}\|A^T\wv\|_{\infty}, 2\lambda\sqrt{s}\sqrt{d}\right\} \left(\|\hv\|_2 +  \frac{\|\fv\|_2}{\sqrt{d}}\right)
$$
or 
\begin{align}
\label{eq:final_hf}
\|\hv\|_2 + \frac{\|\fv\|_2}{\sqrt{d}} \leq  128\max\left\{{C}\frac{\big(\sqrt{k}\sigma + \eta\big)}{\sqrt{d}} + \frac{\sqrt{k}}{{d}}\|A^T\wv\|_{\infty}, 2\lambda\sqrt{s}\sqrt{d}\right\}.
\end{align}
Now using the fact that $\|\wv\|_{\infty} \leq \delta$ and $A$ is an i.i.d. standard Gaussian matrix, we can obtain the following bound from \eqref{eq:final_hf}, which holds with high probability. 
\begin{align}
\|\hv\|_2 + \frac{\|\fv\|_2}{\sqrt{d}} \leq \tilde{C}\max\left\{\sqrt{\frac{{k\log k}}{d}}, \sqrt{\frac{s\log d}{d}}\right\},
\end{align}
where $\tilde{C}$ is a large enough absolute constant.
\end{proof}

%%%%%%%%%%%%%%%%%%%%%%%%%%%%%%%%%%%%%%%%%%%%%%%%%%%
%%%%%%%%%%%%%%%%%%%%%%%%%%%%%%%%%%%%%%%
% Acknowledgements
%%%%%%%%%%%%%%%%%%%%%%%%%%%%%%%%%%%%%%%%%
%%%%%%%%%%%%%%%%%%%%%%%%%%%%%%%%%%%%%%%%%%%%%%%%%%%%%

\paragraph{Acknowledgements.} This research is supported in part by NSF awards CCF 1642550, CCF 1618512 and CAREER award 1642658.

%%%%%%%%%%%%%%%%%%%%%%%%%%%%%%%%%%%%%%%%%%%%%%%%%%%
%%%%%%%%%%%%%%%%%%%%%%%%%%%%%%%%%%%%%%%
% Bibliography
%%%%%%%%%%%%%%%%%%%%%%%%%%%%%%%%%%%%%%%%%
%%%%%%%%%%%%%%%%%%%%%%%%%%%%%%%%%%%%%%%%%%%%%%%%%%%%%

\bibliographystyle{plainnat}
\bibliography{ReLU}

%%%%%%%%%%%%%%%%%%%%%%%%%%%%%%%%%%%%%%%%%%%%%%%%%%%
%%%%%%%%%%%%%%%%%%%%%%%%%%%%%%%%%%%%%%%
% Appendix
%%%%%%%%%%%%%%%%%%%%%%%%%%%%%%%%%%%%%%%%%
%%%%%%%%%%%%%%%%%%%%%%%%%%%%%%%%%%%%%%%%%%%%%%%%%%%%%

\appendix 

\section{Results on Matrix Perturbation}
\label{sec:perturbation}

Let $M$ be a $d \times n$ matrix, where without loss of generality we assume that $d \geq n$. Let $M$ have the following singular value decomposition.
\begin{align}
M = U \Sigma V^T, 
\end{align}
where $\Sigma = {\rm Diag}\left(\sigma_1, \sigma_2,\ldots, \sigma_n\right)$ is the diagonal matrix with the singular values of $M$ as its diagonal entries. Let $\widehat{M} = M + E$ be the matrix which is obtained by perturbing the original matrix $M$ by an error matrix $E$. Let $\widehat{M}$ have the following singular value decomposition.
\begin{align}
\widehat{M} = \widehat{U} \widehat{\Sigma} \widehat{V}^T,
\end{align}
where $\widehat{\Sigma} = {\rm Diag}\left(\widehat{\sigma}_1,\widehat{\sigma}_2,\ldots, \widehat{\sigma}_n\right)$ is the diagonal matrix comprising the singular values of the perturbed matrix $\widehat{M}$. Let $\gamma_1 \geq \gamma_2 \geq \cdots \geq \gamma_n$ be the singular values of the matrix $U^T\widehat{U}$. Define, 
\begin{align}
\theta_i = \cos^{-1}\gamma_i~~\forall i \in [n]~~\text{and}~~\Theta(U, \widehat{U}) = {\rm Diag}\big(\theta_1, \theta_2,\ldots, \theta_n\big).
\end{align}
Note that $\{\theta_i\}_{i \in [n]}$ are referred to as the {\em canonical angles} between $\Uc$ and $\widehat{\Uc}$, the subspaces spanned by the columns of the matrices $U$ and $\widehat{U}$, respectively. It is common to use $\|\sin \Theta(U, \widehat{U})\|_F$ as a distance measure between $\Uc$ and $\widehat{\Uc}$.

In \cite{YuWS14}, Yu et al. present the following result which bounds the distance between the subspaces spanned by the singular vectors of the original matrix $M$ and the perturbed matrix $\widehat{M}$, respectively\footnote{Here, we state a special case of the result from \cite{YuWS14}. See \cite[Theorem 4]{YuWS14} for the statement of the general result.}. 

\begin{theorem}
\label{thm:yu}
Let $M, \widehat{M} = M + E \in \R^{d \times n}$ have singular values $\sigma_1 \geq \cdots \sigma_{\min\{d, n\}}$ and $\widehat{\sigma}_1 \geq \cdots \widehat{\sigma}_{\min\{d, n\}}$, respectively. Fix $1 \leq r \leq \rank(A)$ and assume that $$\sigma^2_{r} - \sigma^2_{r+1} \geq 0.$$ Let $U = (\uv_1, \uv_2,\ldots, \uv_{r}) \in \R^{d \times r}$ and $\widehat{U} = (\widehat{\uv}_1, \widehat{\uv}_2,\ldots, \widehat{\uv}_r) \in \R^{d \times r}$ contain the left singular vectors associated with the $r$ leading singular values of $M$ and $\widehat{M}$, respectively. Then, 
\begin{align}
\|\sin \Theta(U, \widehat{U})\|_F \leq \frac{2(2\sigma_1 + \|E\|)\cdot\min\{\sqrt{r}\|E\|, \|E\|_{F}\}}{\sigma_r^2 - \sigma_{r+1}^2}.
\end{align}
Moreover, there exists an orthogonal matrix $O \in \R^{r \times r}$ such that
\begin{align}
\|U - \widehat{U} O\|_F \leq \frac{2^{3/2}(2\sigma_1 + \|E\|)\cdot\min\{\sqrt{r}\|E\|, \|E\|_{F}\}}{\sigma_r^2 - \sigma_{r+1}^2}.
\end{align}
\end{theorem}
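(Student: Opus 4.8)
The plan is to reduce the singular-vector perturbation bound to a symmetric eigenvector perturbation bound for the Gram matrices $MM^T$ and $\widehat M\widehat M^T$, to apply a $\sin\Theta$-type estimate there, and then to translate back.

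First I would record that the left singular vectors $\uv_1,\dots,\uv_r$ of $M$ are precisely the $r$ leading unit eigenvectors of the symmetric positive semidefinite matrix $MM^T$, whose eigenvalues are $\sigma_1^2\ge\sigma_2^2\ge\cdots$, and likewise $\widehat\uv_1,\dots,\widehat\uv_r$ are the $r$ leading eigenvectors of $\widehat M\widehat M^T$. Writing $\widehat M=M+E$, the relevant symmetric perturbation is
$$\widetilde E:=\widehat M\widehat M^T-MM^T=ME^T+EM^T+EE^T ,$$
so the problem becomes one of eigenvector perturbation for the symmetric pair $MM^T$ and $MM^T+\widetilde E$, with the population eigengap at level $r$ equal to $\sigma_r^2-\sigma_{r+1}^2$ (if this is $0$ there is nothing to prove).

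Next I would bound $\widetilde E$ in both norms using $\|XY\|\le\|X\|\,\|Y\|$ and $\|XY\|_F\le\|X\|\,\|Y\|_F$ together with $\|M\|=\sigma_1$:
$$\|\widetilde E\|\le 2\sigma_1\|E\|+\|E\|^2=(2\sigma_1+\|E\|)\|E\|,\qquad \|\widetilde E\|_F\le (2\sigma_1+\|E\|)\|E\|_F ,$$
hence $\min\{\sqrt r\,\|\widetilde E\|,\ \|\widetilde E\|_F\}\le(2\sigma_1+\|E\|)\min\{\sqrt r\,\|E\|,\ \|E\|_F\}$. Then I would invoke the symmetric-matrix $\sin\Theta$ bound of \cite{YuWS14} (the variant that uses the \emph{unperturbed} eigengap at the cost of a factor $2$), applied to $MM^T$ and $\widehat M\widehat M^T$ with the leading block of size $r$, to obtain
$$\|\sin\Theta(U,\widehat U)\|_F\le\frac{2\,\min\{\sqrt r\,\|\widetilde E\|,\ \|\widetilde E\|_F\}}{\sigma_r^2-\sigma_{r+1}^2}\le\frac{2(2\sigma_1+\|E\|)\min\{\sqrt r\,\|E\|,\ \|E\|_F\}}{\sigma_r^2-\sigma_{r+1}^2},$$
which is the first displayed inequality. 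The mechanism behind that bound is the Sylvester identity $\Lambda_2\,(U_\perp^T\widehat U)-(U_\perp^T\widehat U)\,\widehat\Lambda_1=-\,U_\perp^T\widetilde E\,\widehat U$, where $U_\perp$ spans the trailing eigenspace of $MM^T$ and $\Lambda_2,\widehat\Lambda_1$ are the corresponding eigenvalue blocks, combined with $\|\sin\Theta(U,\widehat U)\|_F=\|U_\perp^T\widehat U\|_F$ and Weyl's inequality to compare perturbed and unperturbed eigenvalues.

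Finally, for the second inequality I would pass from the canonical angles to the nearest orthogonal alignment. Take an SVD $U^T\widehat U=W_1\,\mathrm{Diag}(\gamma_1,\dots,\gamma_r)\,W_2^T$ with $\gamma_i=\cos\theta_i\in[0,1]$, and set $O=W_2W_1^T$, which is orthogonal. Since $U$ and $\widehat U O$ both have orthonormal columns, $\|U-\widehat U O\|_F^2=2r-2\,\tr(U^T\widehat U O)=2\sum_{i=1}^r(1-\gamma_i)\le 2\sum_{i=1}^r(1-\gamma_i^2)=2\,\|\sin\Theta(U,\widehat U)\|_F^2$, the inequality using $\gamma_i\in[0,1]$. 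Taking square roots turns the constant $2$ into $2^{3/2}$, giving the claimed bound. The only genuinely nontrivial ingredient is the symmetric $\sin\Theta$ estimate in the third step — everything else is bookkeeping — and since that is exactly \cite[Theorem 4]{YuWS14} specialized to this setting, in the paper we simply quote it.
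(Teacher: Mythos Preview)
Your proposal is correct, but there is nothing to compare it against: the paper does not prove Theorem~\ref{thm:yu} at all. It is stated in Appendix~\ref{sec:perturbation} purely as a quotation from \cite{YuWS14} (with a footnote pointing to \cite[Theorem~4]{YuWS14} for the general version), and is then invoked as a black box in Section~\ref{sec:col_space}. You yourself note this in your final sentence.

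That said, the argument you sketch is essentially the proof given in \cite{YuWS14}: pass to the symmetric eigenproblem for $MM^T$ versus $\widehat M\widehat M^T$, bound the symmetric perturbation $\widetilde E=ME^T+EM^T+EE^T$ by $(2\sigma_1+\|E\|)\|E\|$ in operator norm and $(2\sigma_1+\|E\|)\|E\|_F$ in Frobenius norm, apply the Davis--Kahan--type bound with the unperturbed eigengap $\sigma_r^2-\sigma_{r+1}^2$, and finally convert $\|\sin\Theta\|_F$ to $\|U-\widehat UO\|_F$ via the SVD of $U^T\widehat U$ and the elementary inequality $1-\gamma_i\le1-\gamma_i^2$. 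Each step is sound as written.
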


\section{Proofs of Section~\ref{sec:ML_real}}
\label{appen:ML_real}

\begin{lemma}[\textbf{Defining the distance measure}] 
%\label{lem:dist_real}
Given the observation matrix $Y$ which is related to the matrix $M$ according to \eqref{eq:sigmodel}, let $\Xc_{Y, \nu, \gamma}$ be as defined in \eqref{eq:Xc_real}. Then, for any $X \in \Xc_{Y, \nu, \gamma}$, we have
\begin{align}
\avg\left[\overline{\Lc}_{Y}(M) - \overline{\Lc}_{Y}(X)\right] \geq \beta_{\gamma}(p) \omega_p \cdot \|M - X\|^2_F.
\end{align}
\end{lemma}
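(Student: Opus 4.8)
The plan is to establish the inequality row-by-row, since both $\overline{\Lc}_Y$ and $\|M-X\|_F^2$ decompose as sums over the $d$ rows, and the bias coordinates $b_i$ are independent. So it suffices to show that for each $i\in[d]$,
\[
\avg\left[\log\pr(\yv_i\mid \mv_i) - \log\pr(\yv_i\mid \xv_i)\right] \geq \beta_\gamma(p)\,\omega_p\cdot\|\mv_i - \xv_i\|_2^2,
\]
where the expectation is over the randomness in $b_i$ (which determines $\yv_i$ given $\mv_i$), and $X\in\Xc_{Y,\nu,\gamma}$. The left-hand side is a conditional KL-type divergence (it is exactly $\mathrm{KL}$ between the law of $\yv_i$ under $\mv_i$ and under $\xv_i$, since $M$ is the true parameter), hence nonnegative; the work is in getting the quadratic lower bound with the correct constant.

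First I would condition on the event $\{N_{Y,i} = s\}$ for each $s\in\{0,1,\dots,n\}$ and analyze the contribution. The key observation, already developed in the excerpt, is that the map $b_i\mapsto \yv_i$ is piecewise explicit: the ordering of the entries of $\mv_i$ is fixed, so as $b_i$ sweeps $\R$ the set $\Nc_Y(i)$ of positive coordinates is determined by where $-b_i$ falls among the order statistics $M_{i,(1)}\geq\cdots\geq M_{i,(n)}$. On the event $\{N_{Y,i}=s\}$ with $s\geq 1$, the likelihood is $p(Y_{i,(s)} - X_{i,(s)}) = p(b_i + M_{i,(s)} - X_{i,(s)})$ since $Y_{i,(s)} = M_{i,(s)} + b_i$; comparing to the true value $p(b_i)$ gives the per-event log-ratio $\log\frac{p(b_i)}{p(b_i + \Delta_{i,s})}$ with $\Delta_{i,s} = M_{i,(s)} - X_{i,(s)}$. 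On the event $\{N_{Y,i}=0\}$ the ratio involves $\log\frac{F(\infty, M_i^\ast)}{F(\infty, X_i^\ast)}$, i.e. tail probabilities. Then I would integrate over $b_i$ against $p$, splitting the integral into the sub-intervals on which $N_{Y,i}$ takes each value, and sum. The standard trick (as in \cite{DPBW14}) is to reduce the resulting sum of KL divergences to a sum of squared Hellinger-type or chi-square terms: for a density $p$ and shift $\Delta$, $\int p(b)\log\frac{p(b)}{p(b+\Delta)}\,db$ is lower-bounded, via the definition of $\beta_\alpha(p)$ in \eqref{eq:flat}, by roughly $\beta_\gamma(p)\,\Delta^2$ times the mass of the relevant interval, and that mass is at least $\omega_p$ by the definition of $\omega_{p,\gamma,\nu}$. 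The flatness parameter is precisely designed so that $|f'(x)|^2/(4|f(x)|)\geq\beta_\alpha(f)$ translates into a quadratic lower bound on the local KL; one writes the KL as an integral of $\big(\sqrt{p(b)}-\sqrt{p(b+\Delta)}\big)^2$-type quantities (squared Hellinger bounds KL from below up to constants only in the other direction, so more care is needed — one should instead directly Taylor-expand or use the integral form $\mathrm{KL}(p\|p_\Delta) = \int_0^\Delta\int_0^t (\text{Fisher-information-like term})$, bounding the second derivative of $t\mapsto\mathrm{KL}(p\|p_{(\cdot)})$ below using $\beta$).

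The summation over $s$ needs the constraint structure of $\Xc_{Y,\nu,\gamma}$: the definition forces $X_{i,(1)} - X_{i,(2)}$ etc. to match the observed gaps $Y_{i,(1)} - Y_{i,(s_i)}$, and forces the $(s_i{+}1)$-th largest entry to sit below $X_{i,(s_i)}$ by a margin $\nu$; this is what ensures the relevant intervals of $b_i$ have the same endpoints for $M$ and for $X$ up to shifts, so the per-event comparisons line up and the collected squared shifts $\sum_{s} \Delta_{i,s}^2$ equal (up to the membership-in-$\Xc$ bookkeeping) $\|\mv_i - \xv_i\|_2^2$. Summing the per-row bounds $\beta_\gamma(p)\,\omega_p\,\|\mv_i-\xv_i\|_2^2$ over $i\in[d]$ gives $\beta_\gamma(p)\,\omega_p\,\|M-X\|_F^2$, completing the argument.

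I expect the main obstacle to be exactly this last bookkeeping: showing that the sum of the squared shifts arising from the event-by-event KL computation, under the somewhat intricate constraints defining $\Xc_{Y,\nu,\gamma}$ (which tie together which coordinates are "on", the matching of gaps, and the $\nu$-margin), genuinely reproduces $\|M - X\|_F^2$ rather than some smaller or differently-weighted quantity — and handling the $N_{Y,i}=0$ (all-negative row) case, where the comparison is between tail integrals $F(\infty,\cdot)$ rather than density values, so one must invoke the second branch in the definition \eqref{eq:steep}/\eqref{eq:flat} and the $\omega_p$ lower bound on $F$ to get a clean quadratic bound there too. The symmetrization/contraction machinery is not needed for this lemma; it is purely a pointwise (in the random draw) divergence estimate plus the quadratic lower bound on local KL supplied by the flatness parameter.
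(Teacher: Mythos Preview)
Your plan is essentially the paper's proof: row-by-row decomposition, condition on $N_{Y,i}=s$, compare $p(b)$ to the shifted density, and extract a quadratic via the flatness parameter and $\omega_p$. The one place you second-guess yourself is unnecessary: you write that ``squared Hellinger bounds KL from below up to constants only in the other direction,'' but in fact $\int p\log(p/q)\,db \geq \int(\sqrt{p}-\sqrt{q})^2\,db$ is the correct direction (apply $\log y\leq y-1$ with $y=\sqrt{q/p}$), and this is exactly what the paper does. After that, the mean value theorem gives $(\sqrt{p(b)}-\sqrt{p(b+\Delta)})^2 = \tfrac{|p'(\xi)|^2}{4p(\xi)}\Delta^2 \geq \beta_\gamma(p)\,\Delta^2$, and the interval masses are bounded below by $\omega_p$. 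The $N_{Y,i}=0$ term is handled by a second application of $\log y\leq y-1$ (with $y=F(\infty,X_i^*)/F(\infty,M_i^*)$), which shows that this term plus a residual from the Hellinger manipulation is nonnegative; it does not contribute its own quadratic piece. So your proposed Fisher-information/second-derivative route is not needed---the Hellinger lower bound on KL already works.
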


\begin{proof} First, we recall our notation that given the original matrix $M$, for $i \in [d]$ and $j \in [n]$, $M_{i,(j)}$ denotes the $j$-th largest element of the $i$-th row of $M$, i.e., for $i \in [d]$,
\begin{align*}
M_{i, (1)} \geq M_{i, (2)} \geq \cdots \geq M_{i, (n)}.
\end{align*}
We define, $M^{\ast}_{i} = \max_{j \in [n]}M_{i,j}.$
Thus,
\begin{align}
\label{eq:Dexp_real}
&\avg\left[\overline{\Lc}_{Y}(M) - \overline{\Lc}_{Y}(X)\right] \nonumber \\
&~~~~~= \sum_{i = 1}^{d}\avg\left[\mathbbm{1}_{\{N_{Y,i} = 0\}}\cdot \log \frac{F(\infty,M^{\ast}_{i})}{F(\infty,X^{\ast}_{i})} +  \sum_{s = 1}^{n} \mathbbm{1}_{\{N_{Y,i} = s\}}\cdot \log \frac{p(Y_{i, (s)} - M_{i,(s)})}{p(Y_{i, (s)} - X_{i,(s)})}\right] \nonumber \\
&~~~~~= \sum_{i = 1}^{d} \Big( F(\infty,M^{\ast}_{i})\cdot \log \frac{F(\infty,M^{\ast}_{i})}{F(\infty,X^{\ast}_{i})} + \sum_{s = 1}^{n-1} \int_{-M_{i,(s)}}^{-M_{i,(s+1)}}p(b) \cdot \log \frac{p(b)}{p(b +  M_{i,(s)} - X_{i,(s)})}d b~~+ \nonumber \\
&~~~~~~~~~~~~~~~~~~~~~~~~~~\int_{-M_{i,(n)}}^{\infty}p(b) \cdot \log \frac{p(b)}{p(b +  M_{i,(n)} - X_{i,(n)})}d b \Big),
\end{align}
where $p : \R \to \R$ represents the probability density function of the bias RV and $F$ is defined as
$$
F(x_1, x_2) = \mathbb{P}(-x_1 \leq b \leq - x_2).  
$$
Given the matrices, $M, X \in \Xc_{Z, \nu, \gamma}$, we define a new (density) function $g$ as follows.
\begin{align}
g(u) = \begin{cases}
p(u + M_{i,(s)} - X_{i,(s)}) & \text{if}~~u \in (-M_{i,(s)}, - M_{i,(s+1)}]~~\text{for}~~s \in [n-1] \\
p(u + M_{i,(n)} - X_{i,(n)} ) & \text{if}~~u \in [-M_{i,(n)}, \infty).
\end{cases}
\end{align}
Recall that for $x \in \R$, we have $x \leq e^x - 1$. For $x = \log y$, this gives us that 
\begin{align}
\label{eq:basic_ineq}
\log y \leq y - 1.
\end{align}
In particular, for $b \in \R$, employing \eqref{eq:basic_ineq} with $y = \sqrt{\frac{g(b)}{p(b)}}$, we get that
\begin{align*}
p(b)\cdot \log \sqrt{\frac{g(b)}{p(b)}} \leq p(b) \cdot \left(\sqrt{\frac{g(b)}{p(b)}} - 1\right) = \left(\sqrt{p(b)g(b)} - p(b)\right)
\end{align*}
or 
\begin{align}
\label{eq:basic_1}
p(b)\cdot \log \frac{p(b)}{g(b)} \geq 2\cdot \left(p(b) - \sqrt{p(b)g(b)}\right).
\end{align}
By using \eqref{eq:basic_1}, for every $i \in [d]$, we obtain that
\begin{align}
\label{eq:basic_2}
&\sum_{s = 1}^{n-1} \int_{-M_{i,(s)}}^{-M_{i,(s+1)}}p(b) \cdot \log \frac{p(b)}{p(b +  M_{i,(s)} - X_{i,(s)})}d b + \int_{-M_{i,(n)}}^{\infty}p(b) \cdot \log \frac{p(b)}{p(b +  M_{i,(n)} - X_{i,(n)})}d b \nonumber \\
&~~~~~~~~ = \int_{-M_{i,(1)}}^{\infty} p(b) \cdot \log \frac{p(b)}{g(b)} d b \nonumber \\
&~~~~~~~~\geq  \int_{-M_{i,(1)}}^{\infty} 2\cdot \left( p(b) - \sqrt{p(b)g(b)} \right) \nonumber \\
&~~~~~~~~=  2\cdot \Big( 1 - F(\infty,M_{1,(1)}) -  \int_{-M_{i,(1)}}^{\infty} \sqrt{p(b)g(b)} d b \Big)\nonumber \\
&~~~~~~~~ = \left(1 - F(\infty,M^{\ast}_{i}) \right) +  \left(1 - F(\infty,X^{\ast}_{i}) \right)  -  \int_{-M_{i,(1)}}^{\infty} 2 \cdot \sqrt{p(b)g(b)} d b ~~~+ \nonumber \\
&~~~~~~~~~~~~~~~~~~~~~~~~~~~ \left(1 - F(\infty,M^{\ast}_{i}) \right) - \left(1 - F(\infty,X^{\ast}_{i}) \right) \nonumber \\
&~~~~~~~~ =   \int_{-M_{i,(1)}}^{\infty} \left( \sqrt{p(b)} - \sqrt{g(b)}\right)^2 d b  +  \left(1 - F(\infty,M^{\ast}_{i}) \right) - \left(1 - F(\infty,X^{\ast}_{i}) \right)
\end{align}
For $i \in [d]$, we now employ \eqref{eq:basic_ineq} with $y = \frac{F(\infty,X^{\ast}_{i})}{F(\infty,M^{\ast}_{i})}$ to obtain the following.
\begin{align*}
F(\infty,M^{\ast}_{i})\cdot \log \frac{F(\infty,X^{\ast}_{i})}{F(\infty,M^{\ast}_{i})} \leq F(\infty,M^{\ast}_{i}) \cdot \left( \frac{F(\infty,X^{\ast}_{i})}{F(\infty,M^{\ast}_{i})} - 1\right) =  {F(\infty,X^{\ast}_{i})} - {F(\infty,M^{\ast}_{i})}.
\end{align*}
or
\begin{align}
\label{eq:basic_3} 
&F(\infty,M^{\ast}_{i})\cdot \log \frac{F(\infty,M^{\ast}_{i})}{F(\infty,X^{\ast}_{i})}  + {F(\infty,X^{\ast}_{i})} - {F(\infty,M^{\ast}_{i})} \nonumber \\
&~~~~~~~~~ = F(\infty,M^{\ast}_{i})\cdot \log \frac{F(\infty,M^{\ast}_{i})}{F(\infty,X^{\ast}_{i})}  + \left(1 - {F(\infty,M^{\ast}_{i})} \right)  - \left(1 - {F(\infty,X^{\ast}_{i})}\right) \geq 0 
\end{align}
By combining \eqref{eq:basic_2} and \eqref{eq:basic_3}, we obtain that
\begin{align}
\label{eq:basic_4}
& F(\infty,M^{\ast}_{i})\cdot \log \frac{F(\infty,M^{\ast}_{i})}{F(\infty,X^{\ast}_{i})} + \sum_{s = 1}^{n-1} \int_{-M_{i,(s)}}^{-M_{i,(s+1)}}p(b) \cdot \log \frac{p(b)}{p(b +  M_{i,(s)} - X_{i,(s)})}d b~~~+ \nonumber \\
&~~~~~~~~\int_{-M_{i,(n)}}^{\infty}p(b) \cdot \log \frac{p(b)}{p(b +  M_{i,(n)} - X_{i,(n)})}d b \nonumber \\
&~~~~~~~~~~~~~~~~~~~~~~~~~~~~~\geq  \int_{-M_{i,(1)}}^{\infty} \left( \sqrt{p(b)} - \sqrt{g(b)}\right)^2 d b \nonumber \\
&~~~~~~~~~~~~~~~~~~~~~~~~~~~~~ =  \sum_{s = 1}^{n-1} \int_{-M_{i,(s)}}^{-M_{i,(s+1)}} \left( \sqrt{p(b)} - \sqrt{p(b + M_{i,(s)} - X_{i,(s)})}\right)^2 d b~~~~+ \nonumber \\
&~~~~~~~~~~~~~~~~~~~~~~~~~~~~~~~~~\int_{-M_{i,(n)}}^{\infty}  \left( \sqrt{p(b)} - \sqrt{p(b + M_{i,(n)} - X_{i,(n)})}\right)^2 d b \nonumber \\
&~~~~~~~~~~~~~~~~~~~~~~~~~~~~~\overset{(i)}{=} \sum_{s = 1}^{n-1} \int_{-M_{i,(s)}}^{-M_{i,(s+1)}} \left( \frac{p'(\xi_{s})}{2 \sqrt{p(\xi_{s})}}\big(M_{i,(s)} - X_{i,(s)}\big)\right)^2 d b~~~~+ \nonumber \\
&~~~~~~~~~~~~~~~~~~~~~~~~~~~~~~~~~\int_{-M_{i,(n)}}^{\infty}  \left( \frac{p'(\xi_{n})}{2 \sqrt{p(\xi_{n})}}\big(M_{i,(n)} - X_{i,(n)}\big)\right)^2 d b \nonumber \\ 
&~~~~~~~~~~~~~~~~~~~~~~~~~~~~~\overset{(ii)}{\geq} \beta_{\gamma}(p) \cdot \left(\sum_{s = 1}^{n-1} \int_{-M_{i,(s)}}^{-M_{i,(s+1)}} \big(M_{i,(s)} - X_{i,(s)}\big)^2 d b + \int_{-M_{i,(n)}}^{\infty} \big(M_{i,(n)} - X_{i,(n)}\big)^2 d b \right) \nonumber \\
&~~~~~~~~~~~~~~~~~~~~~~~~~~~~~\overset{(iii)}{\geq} \beta_{\gamma}(p) \omega_{p} \cdot \sum_{s = 1}^{n} \big(M_{i,(s)} - X_{i,(s)}\big)^2,
\end{align}
where $(i)$ follows from the Mean Value Theorem with suitable $\{\xi_{i}\} \subset \R$ and $(ii)$ follows by assuming that we have 
$$
 \frac{p'(u)}{2 \sqrt{p(u)}} \geq \sqrt{\beta_{\gamma}(p)}~~\text{for all}~|u|\le \gamma.
$$
Since $M \in  \Xc_{Z, \nu, \gamma}$, we have $|M_{i,(n)}| \leq \gamma$ and $|M_{i,(s)} - M_{i,(s+1)}| \geq \nu$. The step $(iii)$ follows from the assumption that 
$$
F(x, y) \geq \omega_p~~\text{for all $(x, y)$~such that}~|x - y| \geq \nu.
$$
By combining \eqref{eq:Dexp_real} with \eqref{eq:basic_4}, we now obtain that
\begin{align*}
\avg\left[\overline{\Lc}_{Y}(M) - \overline{\Lc}_{Y}(X)\right]  \geq \sum_{i = 1}^{d} \beta_{\gamma}(p) \omega_{p} \cdot \sum_{s = 1}^{n} \big(M_{i,(s)} - X_{i,(s)}\big)^2 = \beta_{\gamma}(p) \omega_p \cdot \|M - X\|^2_F.
\end{align*} 
\end{proof}

\section{Proofs of Section~\ref{sec:robust}}
\label{appen:robust}

Here we state the special form of a result that was obtained in~\cite{NgTran13} for the generals setting, where one may potentially require the vector $\cv$ to be sparse as well.

\begin{lemma}
\label{lem:NgTran}
Let $A \in \R^{d \times n}$ be random matrix that has i.i.d. standard Gaussian entries. Furthermore, let $\Rc \subset \R^{k} \times \R^{d}$ be as defined in \eqref{eq:Rset}. Then, with probability at least $1 - c\exp(-\tilde{c} d)$, we have 
\begin{align}
\frac{1}{2d}\cdot\|A\hv +\fv\|_2 \geq \frac{1}{128}\left(\|\hv\| +\frac{\|\fv\|}{\sqrt{d}}\right)^2~~\text{for all}~(\hv, \fv) \in \Rc. 
\end{align}
Here, $c, \tilde{c} > 0 $ are absolute constants.
\end{lemma}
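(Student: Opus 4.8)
The statement is an ``extended restricted eigenvalue'' bound for the map $(\hv,\fv)\mapsto A\hv+\fv$ on the cone $\Rc$, and the plan is to prove it by a covering argument together with Gaussian concentration, in the spirit of \cite{NgTran13}. First I would record two reductions. Since the defining constraint of $\Rc$ in \eqref{eq:Rset} is positively homogeneous of degree one, $\Rc$ is a cone and both sides of the asserted inequality are homogeneous of degree two; hence it suffices to prove $\frac{1}{2d}\|A\hv+\fv\|_2^2\ge\frac{1}{128}$ over $\Rc\cap\{\|\hv\|_2+\|\fv\|_2/\sqrt d=1\}$. Next, $\Rc$ depends on $A$ only through the scalar $\|A^{T}\wv\|_\infty$, so I would replace that scalar by a deterministic high-probability upper bound, enlarging $\Rc$ to a deterministic cone $\Rc'=\{\lambda\|\fv_{\Sc^{C}}\|_1\le\alpha_0\|\hv\|_2+3\lambda\|\fv_{\Sc}\|_1\}$, and prove the bound on $\Rc'$. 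After rescaling the $\fv$-block, this is a lower bound on $\inf\|Bx\|_2$ for $B=[\,A\mid\sqrt d\,I_d\,]$ restricted to $\Rc'$ on the unit sphere.

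The next step is to extract an effective sparsity of the $\fv$-component. Using $\|\fv_{\Sc}\|_1\le\sqrt s\,\|\fv\|_2$ (as $|\Sc|=s$) together with the constraint, $\|\fv\|_1=\|\fv_{\Sc}\|_1+\|\fv_{\Sc^{C}}\|_1\le 4\sqrt s\,\|\fv\|_2+(\alpha_0/\lambda)\|\hv\|_2$, so up to an additive term controlled by $\|\hv\|_2$ the vector $\fv$ behaves like an $O(s)$-sparse vector. Consequently $\Rc'$ intersected with the unit sphere admits an $\epsilon$-net of cardinality $\exp\!\big(C(k+s)\log(d/\epsilon)\big)$: the $k$ comes from the unconstrained block $\hv\in\R^{k}$, and the $s\log d$ from the standard metric-entropy bound for dilates of the convex hull of $s$-sparse $\ell_2$-balls in $\R^{d}$, with the $\|\hv\|_2$-dependent tail of $\fv$ absorbed into the $\hv$-net. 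I would take $\epsilon$ of order $1/\sqrt d$, which only changes the exponent by a constant.

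For a fixed point $(\hv_0,\fv_0)$ on the sphere one has $A\hv_0+\fv_0\sim\mathcal N(\fv_0,\|\hv_0\|_2^{2}I_d)$, so $\|A\hv_0+\fv_0\|_2$ has expectation $\sqrt{(d-1)\|\hv_0\|_2^{2}+\|\fv_0\|_2^{2}}\gtrsim\sqrt d\,\|\hv_0\|_2+\|\fv_0\|_2$ and, being $\|\hv_0\|_2$-Lipschitz as a function of the Gaussian matrix $A$, concentrates with sub-Gaussian tails at scale $\|\hv_0\|_2$ (Gaussian concentration and Poincar\'e). Asking $\|A\hv_0+\fv_0\|_2$ to drop below half its mean therefore fails with probability at most $e^{-cd}$ — crucially, the deterministic $I_d$-block of $B$ forces the ``signal'' to be at least $\sqrt d\,\|\hv_0\|_2$, which dominates the Lipschitz scale $\|\hv_0\|_2$. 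Since $(k+s)\log d=O(d)$ in the regime where the conclusion is meaningful, a union bound over the net survives, and I would pass from the net to all of $\Rc'$ using $\|A\|_{\mathrm{op}}\le 2\sqrt d$ (also of probability $1-e^{-cd}$). Squaring, dividing by $2d$, and using $(\sqrt d\|\hv\|_2+\|\fv\|_2)^{2}=d(\|\hv\|_2+\|\fv\|_2/\sqrt d)^{2}$ then yields $\frac{1}{2d}\|A\hv+\fv\|_2^{2}\ge\frac{1}{128}(\|\hv\|_2+\|\fv\|_2/\sqrt d)^{2}$ on $\Rc$, with failure probability of the stated form $c\,e^{-\tilde c d}$; the constant $\tfrac1{128}$ leaves ample slack for the losses above.

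The step I expect to be the main obstacle is the honest control of the non-sparse tail $\fv_{\Sc^{C}}$: the cone constraint bounds only its $\ell_1$-norm, by $\tfrac{\alpha_0}{\lambda}\|\hv\|_2+3\|\fv_{\Sc}\|_1$, and not its support, so turning this into a usable effective-sparsity statement — and checking that the resulting metric-entropy and cross-term estimates stay dominated by the scales $\sqrt d\,\|\hv\|_2$ and $\|\fv\|_2$ appearing on the left-hand side — hinges on choosing the regularization level $\lambda$ appropriately relative to $\|\zv+\wv\|_\infty/d$. This is precisely the bookkeeping carried out in \cite{NgTran13}, and the lemma above is the specialization of their extended restricted eigenvalue estimate to the present unconstrained-$\cv$ setting. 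A secondary point, already noted, is that $\Rc$ is itself $A$-dependent; this is dispatched by passing to the deterministic enlargement $\Rc'$ before invoking any probabilistic bound.
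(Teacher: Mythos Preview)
Your proposal is a valid strategy but takes a genuinely different route from the paper. The paper does not use a covering argument at all; instead it expands $\|A\hv+\fv\|_2^2 = \|A\hv\|_2^2 + \|\fv\|_2^2 + 2\langle A\hv,\fv\rangle$, lower-bounds the first two terms via the single uniform isometry estimate $\tfrac{1}{\sqrt d}\|A\hv\|_2 \geq \tfrac{1}{4}\|\hv\|_2$ (which holds with probability $1-c_1e^{-c_2 d}$ for all $\hv\in\R^k$ simultaneously, no cone needed), and then upper-bounds the cross term $\tfrac1d|\langle A\hv,\fv\rangle|$ by the block-shelling technique of \cite{CandesRT06}: partition $[d]$ into blocks $\Sc_1=\Sc,\Sc_2,\dots,\Sc_r$ of a chosen size $s'$, ordered by decreasing magnitude of the entries of $\fv_{\Sc^C}$; bound $\max_i\|A_{\Sc_i}\|_2\le\sqrt k+C\sqrt{s'}$ uniformly via a union bound over all $\binom{d}{s'}$ subsets; and invoke the cone constraint of $\Rc$ together with the standard tail inequality $\sum_{i\ge 3}\|\fv_{\Sc_i}\|_2\le\|\fv_{\Sc^C}\|_1/\sqrt{s'}$ to control $\sum_i\|\fv_{\Sc_i}\|_2$ by a constant times $\sqrt d\,(\|\hv\|_2+\|\fv\|_2/\sqrt d)$. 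The cross term then comes out smaller than the diagonal lower bound for $d$ large enough, yielding the stated inequality directly.

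Your net-plus-concentration scheme would also reach the conclusion under the same regime $(k+s)\log d=O(d)$, but the paper's shelling route is tighter in exactly the place you identify as the main obstacle: it never requires a metric-entropy estimate on $\Rc$, and the cone is used only through the single scalar bound on $\|\fv_{\Sc^C}\|_1$, which feeds straight into the Cand\`es--Romberg--Tao tail sum rather than into a covering-number computation. This sidesteps the bookkeeping around the ``non-sparse tail'' of $\fv$ that you flag. Conversely, your approach is more modular --- the pointwise observation that $A\hv_0+\fv_0\sim\mathcal N(\fv_0,\|\hv_0\|_2^2 I_d)$ concentrates at rate $e^{-cd}$ regardless of $\fv_0$ is clean and would transfer more readily to other sub-gaussian ensembles for $A$, whereas the paper's argument leans on uniform operator-norm bounds for Gaussian submatrices.
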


\begin{proof}
Note that 
\begin{align}
\label{eq:pv_lowerbound1}
\|A\hv + \fv\|_2^2 = \|A\hv\|_2^2 + \|\fv\|_2^2 + 2\langle A\hv, \fv\rangle.
\end{align}
For a  $d \times k$ matrix with i.i.d. Gaussian entries, there exists constants $c_1$ and $c_2$ such that with probability at least $1 - c_1\exp(-c_2d)$, we have 
\begin{align}
\frac{1}{\sqrt{d}}\|A\hv\|_2 \geq \frac{1}{4}\|\hv\|_2.
\end{align}
Therefore, with probability at least $1 - c_1 \exp(-c_2 d)$, we have
\begin{align}
\label{eq:pv_lowerbound2}
\|A\hv\|_2^2 + \|\fv\|_2^2 \geq \frac{d}{16}\|\hv\|^2_2 + \|\fv\|_2^2 \geq \frac{d}{16}(\|\hv\|_2^2 + \|\fv\|^2_2/d) \geq \frac{d}{32}\left(\|\hv\|_2 + \frac{\|\fv\|_2}{\sqrt{d}}\right)^{2}.
\end{align}
Next, we focus on obtaining an upper bound on 
$$
\frac{1}{d}|\langle A\hv, \fv\rangle|.
$$
Towards this we partition the set $[d]$ into $r$ blocks $\Sc_1 = \Sc, \Sc_2, \ldots, \Sc_r$ such that $|\Sc_2| = \cdots |\Sc|_r  = s' \geq |\Sc| = s$. Here, $\Sc_2$ refers to the set of indices of $s'$ largest entries (in terms of absolute value) of $\fv_{\Sc^{C}}$; $\Sc_3$ corresponds to the set of indices of the next $s'$ largest entires of $\fv_{\Sc^{C}}$; and so on. Now, we have
\begin{align}
\label{eq:correlation1}
\frac{1}{d}|\langle A\hv, \fv\rangle| \leq \frac{1}{d}\sum_{i = 1}^{r}|\langle A_{\Sc_i}\hv, \fv_{\Sc_i}\rangle| \leq \frac{1}{d}\max_{i}\|A_{\Sc_i}\|_2\|\hv\|_2 \sum_{i = 1}^{r}\|\fv_{\Sc_i}\|_2
\end{align}
In \cite[Appendix]{NgTran13}, Nguyen and Tran show that, with probability at least $1 - 2\exp(-\tau^2s'/2)$, for a set $\Sc'$ with $|\Sc'| = s'$, we have
\begin{align}
\|A_{\Sc'}\|_2 \leq \left( \sqrt{k} + \sqrt{s'} + \tau\sqrt{s'} \right).
\end{align}
By setting $\tau = \tau'\sqrt{\frac{d}{s'}}$ and taking the union bound over all the subsets of $[d]$ of size $s'$, 
\begin{align}
\label{eq:correlation2}
\|A_{\Sc'}\|_2 \leq \left( \sqrt{k} + \sqrt{s'} + \tau\sqrt{s'} \right)~~\forall~\Sc' \subset [d]~\text{such that}~|\Sc'| = s'
\end{align}
holds with probability at least 
$$
1 - {d \choose s'}\exp(-\tau'^2d/2) \geq 1 - \left(\frac{ed}{s'}\right)^{s'}\exp(-\tau'^2d/2).
$$
{Assuming that $s'\log(d/s') \leq c_3 d$}, the aforementioned probability is at least 
$$
1 - \exp(-(\tau'^2/2 - c_3)d).
$$
On the other hand, we have
\begin{align}
\label{eq:correlation3}
\sum_{i = 1}^{r}\|\fv_{\Sc_i}\|_2 & \overset{(i)}{\leq} 2\|\fv\|_2 + \sum_{i = 3}^{r}\|\fv_{\Sc_i}\|_2 \nonumber \\
&\overset{(ii)}{\leq} 2\|\fv\|_2 + \frac{1}{\sqrt{s'}}\|\fv_{\Sc^{c}}\|_1 \nonumber \\
&\overset{(iii)}{\leq} 2\|\fv\|_2 + \frac{2}{\lambda\sqrt{s'}}\left({C}\frac{\big(\sqrt{k}\sigma + \eta\big)}{\sqrt{d}} + \frac{\sqrt{k}}{{d}}\|A^T\wv\|_{\infty}\right)\|\hv\|_2 + \frac{3}{\sqrt{s'}}\|\fv_{\Sc}\|_1 \nonumber \\
&\overset{(iv)}{\leq} 5\|\fv\|_2 + \frac{2}{\lambda\sqrt{s'}}\left({C}\frac{\big(\sqrt{k}\sigma + \eta\big)}{\sqrt{d}} + \frac{\sqrt{k}}{{d}}\|A^T\wv\|_{\infty}\right)\|\hv\|_2,
\end{align}
where $(i)$ follows from the fact that $\|\fv_{\Sc_1}\|_2 \leq \|\fv_{\Sc_2}\|_2 \leq \|\fv\|_2$; $(ii)$ follows from a standard bound given in \cite{CandesRT06}; $(iii)$ follows from the fact that $\fv$ belongs to the set $\Rc$ defined in \eqref{eq:Rset}; and $(iv)$ is a consequence of the loose bound $\|\fv_{\Sc}\|_1 \leq \sqrt{s}\|\fv_{\Sc}\|_2 \leq \sqrt{s'}\|\fv\|_2$. Next, we use the fact that $\lambda \geq 2 \|\zv+ \wv\|_{\infty}/d$, it follows from \eqref{eq:correlation3} that
\begin{align}
\label{eq:correlation4}
\sum_{i = 1}^{r}\|\fv_{\Sc_i}\|_2 &\leq 5\|\fv\|_2 + \frac{d}{\|\zv + \wv\|_{\infty}\sqrt{s'}}\left({C}\frac{\big(\sqrt{k}\sigma + \eta\big)}{\sqrt{d}} + \frac{\sqrt{k}}{{d}}\|A^T\wv\|_{\infty}\right)\|\hv\|_2 \nonumber \\
&= 5\sqrt{d}\left(\frac{\|\fv\|_2}{\sqrt{d}} +  \frac{\sqrt{d}}{5\|\zv + \wv\|_{\infty}\sqrt{s'}}\left({C}\frac{\big(\sqrt{k}\sigma + \eta\big)}{\sqrt{d}} + \frac{\sqrt{k}}{{d}}\|A^T\wv\|_{\infty}\right)\|\hv\|_2 \right) \nonumber \\
&\leq {5\sqrt{d}\left(\frac{\|\fv\|_2}{\sqrt{d}} + \|\hv\|_2\right)}
\end{align}
By combining \eqref{eq:correlation1}, \eqref{eq:correlation2}, and \eqref{eq:correlation4}, with probability at least $1 - c_4\exp(-c_5d)$, we obtain that
\begin{align}
\label{eq:correlation5}
\frac{1}{d}|\langle A\hv, \fv\rangle| &\leq  \frac{1}{d}  \left( \sqrt{k} + \sqrt{s'} + \tau\sqrt{s'} \right) \|\hv\|_2 {\cdot 5\sqrt{d}\left(\frac{\|\fv\|_2}{\sqrt{d}} + \|\hv\|_2\right)} \nonumber \\
& \leq \frac{5}{\sqrt{d}}  \left( \sqrt{k} + \sqrt{s'} + \tau\sqrt{s'} \right) {\left(\frac{\|\fv\|_2}{\sqrt{d}} + \|\hv\|_2\right)^2} \nonumber \\
%& \leq \frac{10}{\sqrt{d}}  \left( \sqrt{k} + \sqrt{s'} + \tau\sqrt{s'} \right) {\color{red} \left(\frac{\|\fv\|^2_2}{d} + \|\hv\|^2_2\right)} \nonumber \\
& \overset{(i)}{\leq} \frac{1}{128}{\left(\frac{\|\fv\|_2}{\sqrt{d}} + \|\hv\|_2\right)^2},
\end{align}
where $(i)$ follows from large enough $d$. By combining \eqref{eq:pv_lowerbound1}, \eqref{eq:pv_lowerbound2}, and \eqref{eq:correlation5}, we obtain that, for every $(\hv, \fv) \in \Rc$, 
\begin{align}
\frac{1}{2d}\cdot\|A\hv +\fv\|^2_2 \geq \frac{1}{128}\left(\|\hv\| +\frac{\|\fv\|_2}{\sqrt{d}}\right)^2
\end{align}
holds with probability at least $1 - c \exp(-\tilde{c} d)$, with absolute constants $c, \tilde{c} > 0$.
\end{proof}

\end{document}